\def\eqref#1{equation~\ref{#1}}
\def\1{\bm{1}}
\def\vx{{\bm{x}}}
\def\vy{{\bm{y}}}
\DeclareMathAlphabet{\mathsfit}{\encodingdefault}{\sfdefault}{m}{sl}
\SetMathAlphabet{\mathsfit}{bold}{\encodingdefault}{\sfdefault}{bx}{n}
\newcommand{\given}[1][]{\:#1\vert\:}
\DeclareMathOperator{\GP}{\mathcal{GP}}
\DeclareMathOperator{\Normal}{\mathcal{N}}
\newcommand*\diff{\mathop{}\!\mathrm{d}}
\newcommand{\E}{\mathbb{E}}
\newcommand{\R}{\mathbb{R}}
\newcommand{\KL}{D_{\mathrm{KL}}}
\DeclareMathOperator*{\argmax}{arg\,max}
\newtheorem{lem}{Lemma}
\newtheorem{prop}{Proposition}
\begin{document}

%
\runningtitle{Feature Collapse in Deep Kernel Learning}

%
\runningauthor{Joost van Amersfoort, Lewis Smith, Andrew Jesson, Oscar Key, Yarin Gal}

\twocolumn[

  \aistatstitle{On Feature Collapse and Deep Kernel Learning\\for Single Forward Pass Uncertainty}

  \aistatsauthor{Joost van Amersfoort\textsuperscript{1},
    Lewis Smith\textsuperscript{1},
    Andrew Jesson\textsuperscript{1},
    Oscar Key\textsuperscript{2},
    Yarin Gal\textsuperscript{1}}

  \aistatsaddress{\textsuperscript{1}OATML, University of Oxford ~~ \textsuperscript{2}University College London}
]

\begin{abstract}
  Inducing point Gaussian process approximations are often considered a gold standard in uncertainty estimation since they retain many of the properties of the exact GP and scale to large datasets.
  A major drawback is that they have difficulty scaling to high dimensional inputs.
  Deep Kernel Learning (DKL) promises a solution: a deep feature extractor transforms the inputs over which an inducing point Gaussian process is defined.
  However, DKL has been shown to provide unreliable uncertainty estimates in practice.
  We study why, and show that with no constraints, the DKL objective pushes ``far-away'' data points to be mapped to the same features as those of training-set points.
  With this insight we propose to constrain DKL's feature extractor to approximately preserve distances through a bi-Lipschitz constraint, resulting in a feature space favorable to DKL.
  We obtain a model, DUE, which demonstrates uncertainty quality outperforming previous DKL and other single forward pass uncertainty methods, while maintaining the speed and accuracy of standard neural networks.
\end{abstract}

\section{INTRODUCTION}
\label{section:introduction}
Deploying machine learning algorithms as part of automated decision making systems, such as autonomous cars or medical diagnostics, requires implementing fail-safes.
Whenever the model is presented with a novel or ambiguous input, its predictions might not be trustworthy and the input should be deferred to an expert.
A compelling way to handle this situation is to use a model that does not just achieve high accuracy, but is also able to quantify its uncertainty over predictions.
Unfortunately, neural networks extrapolate overconfidently and are not able to express their uncertainty \citep{gal2016uncertainty}.
Alternatives, such as Deep Ensembles \citep{lakshminarayanan2017simple} and MC dropout \citep{gal2016dropout}, require multiple forward passes and are therefore computationally expensive.
This leads to problems with using these models in situations that require real-time decision making, which is not feasible if many forward passes are necessary.

\begin{figure*}[t]
  \begin{subfigure}{0.25\linewidth}
    \centering
    \includegraphics[width=\linewidth]{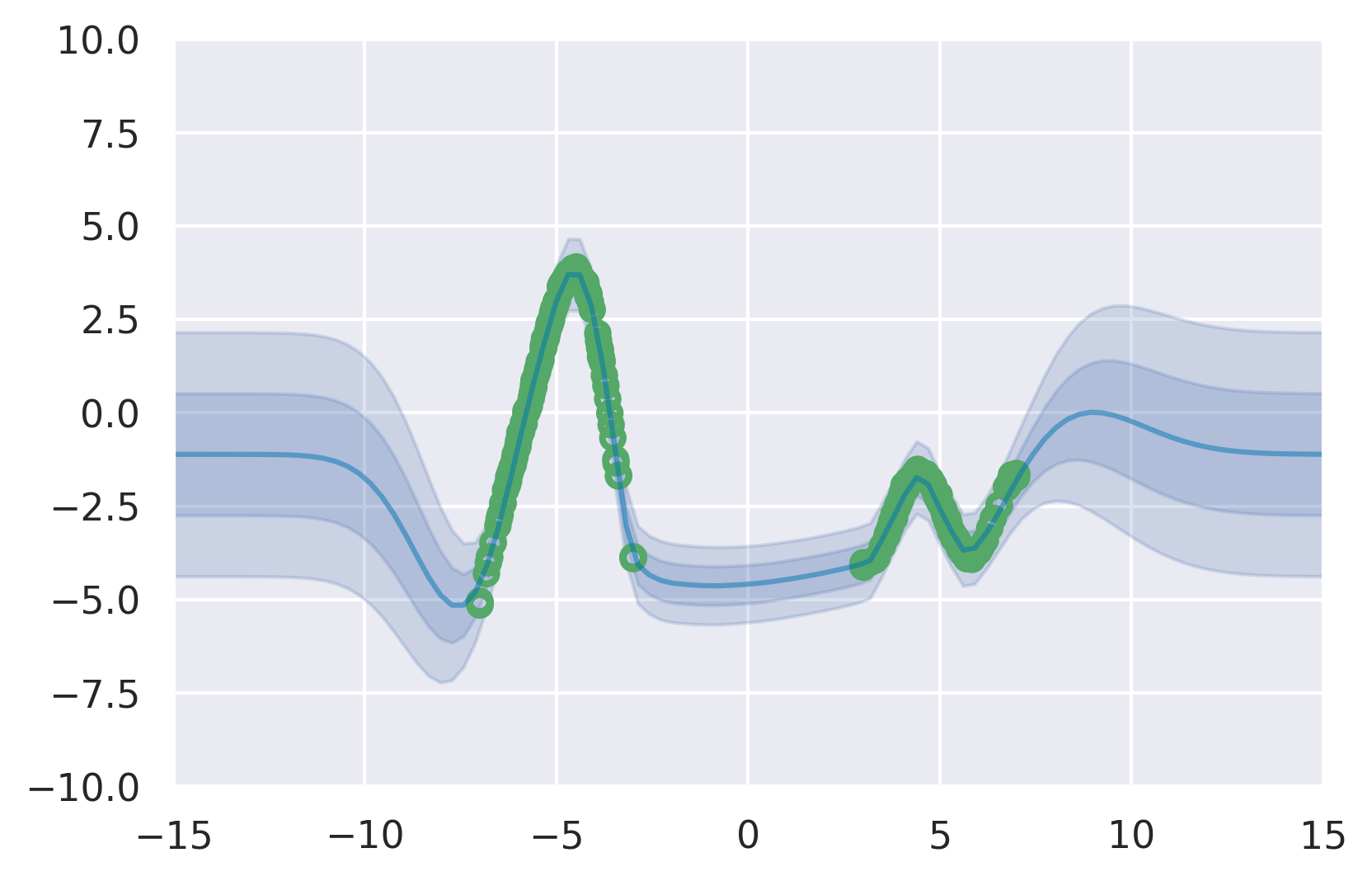}
    \caption{DUE - 1K}
    \label{fig:due_1k}
  \end{subfigure}%
  \begin{subfigure}{0.25\linewidth}
    \centering
    \includegraphics[width=\linewidth]{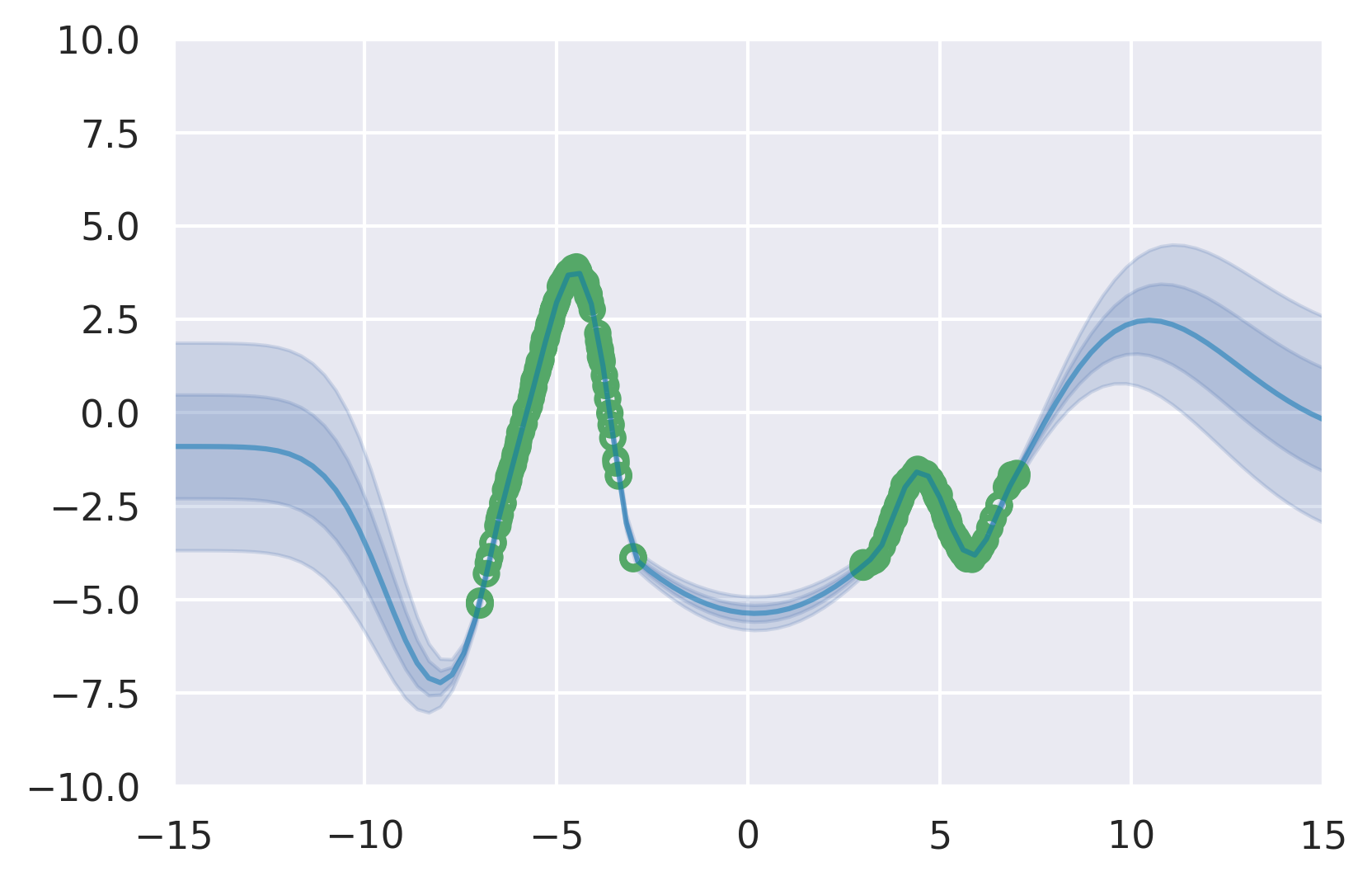}
    \caption{DUE - 1M}
    \label{fig:due_1m}
  \end{subfigure}%
  \begin{subfigure}{0.25\linewidth}
    \centering
    \includegraphics[width=\linewidth]{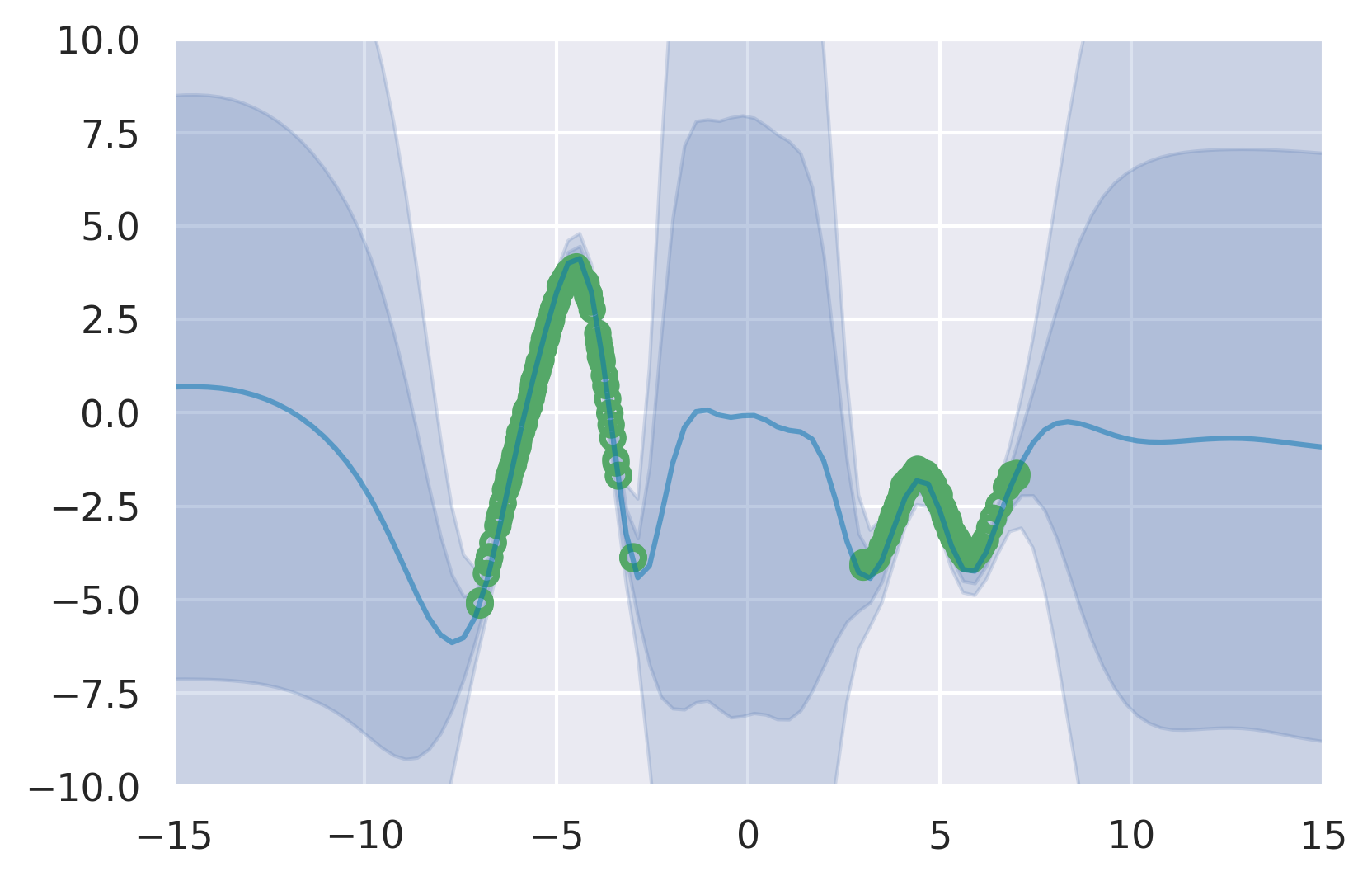}
    \caption{SNGP - 1K}
    \label{fig:sngp_1k}
  \end{subfigure}%
  \begin{subfigure}{0.25\linewidth}
    \centering
    \includegraphics[width=\linewidth]{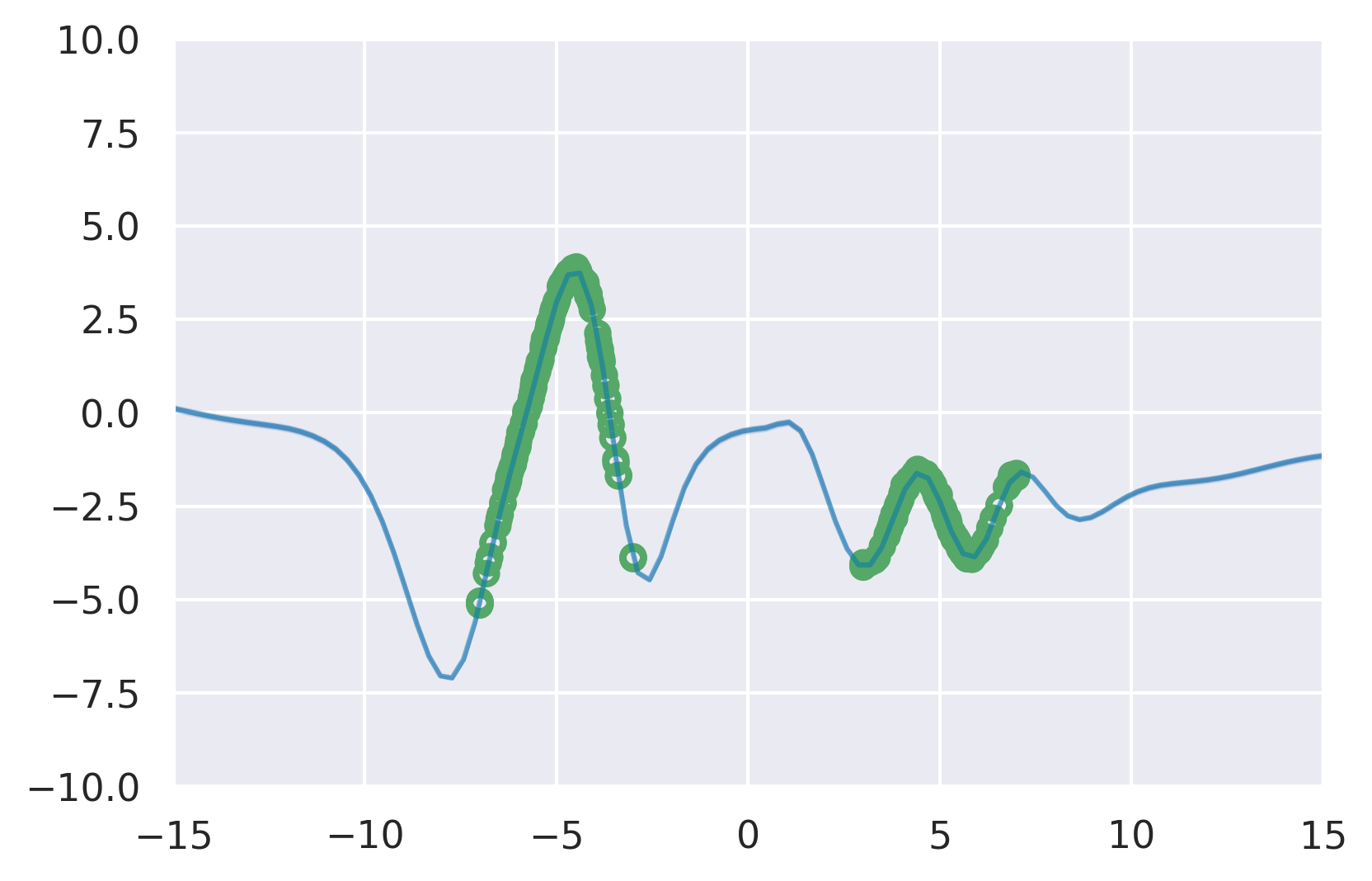}
    \caption{SNGP - 1M}
    \label{fig:sngp_1m}
  \end{subfigure}%
  \caption{
    In green 300 example training data points and in blue the prediction including uncertainty (one and two std).
    We see that DUE performs well when trained with 1 thousand (1K) datapoints and 1 million (1M) data points.
    Meanwhile, the RFF approximation in SNGP concentrates its uncertainty at 1M, and is very uncertain at 1K.
    This highlights a drawback of the parametric RFF approximation.
  }
  \label{fig:rff}
\end{figure*}
This work focuses on the problem of estimating uncertainty in deep learning in a \emph{single} forward pass.
This can in principle be achieved by using a distance-aware output function, such as a Gaussian process (GP) with a stationary kernel, which increases its uncertainty as a particular input gets further away from the training data.
However on high dimensional structured data, such as images, computing a distance in input space is less meaningful.
A practical solution is to transform the input data using a deep feature extractor and placing an approximate GP, that can scale to large datasets, on the computed features.
Two key approaches to approximate the GP are Random Fourier Features (RFF) \citep{rahimi2008random,liu2020simple} and variational inducing points, which is also known as Deep Kernel Learning (DKL) \citep{wilson2016deep, wilson2016stochastic}.
The RFF approximation, while computationally fast, sacrifices the non-parametric property of the GP and is often avoided in the GP literature (\citet{van2019sparse} and Figure \ref{fig:rff} below).
A variational inducing point approximation maintains the GP's non-parametric properties \citep{hensman2015scalable, titsias2009variational}, and can in combination with DKL obtain accuracies matching standard softmax neural networks \citep{bradshaw2017adversarial, wilson2016stochastic}.
We discuss the differences between the RFF and inducing point approximation in detail in Section \ref{subsection:approximations}.
However, maintaining good uncertainty estimates remains elusive: previous DKL works have mostly been evaluated in terms of accuracy and robustness to adversarial examples \citep{bradshaw2017adversarial, wilson2016stochastic} with recent research showing these to underperform in uncertainty estimation \citep{ober2021promises}.

We study why this happens, and propose Deterministic Uncertainty Estimation (DUE - pronounced “Dewey”), which builds on DKL and addresses its limitations.
Examining why DKL uncertainty underperforms, we find that for certain feature extractors, data points dissimilar to the training data (also called ``out-of-distribution'' or OoD data) might be mapped closed to feature representations of in-distribution points.
This is called ``feature collapse'' (Figure \ref{fig:feature_collapse}), and suggests that a constraint must be placed on the deep feature extractor.
To understand what constraints must be placed on the DKL feature extractor, we take inspiration from DUQ \citep{van2020uncertainty} and SNGP \citep{liu2020simple} which propose to use a bi-Lipschitz constraint on a feature extractor in the context of radial basis function (RBF) networks and an RFF GP.
This constraint enforces the feature representation to be sensitive to changes in the input (lower Lipschitz, avoids feature collapse) but also generalize due to smoothness (upper Lipschitz) (see Figure \ref{fig:twomoons}).

Compared to alternative single forward pass uncertainty methods, DUE has a number of advantages.
DUQ's one-vs-all loss cannot be extended to regression, while DUE is demonstrated to work well on both classification and regression.
SNGP uses the RFF approximation, which means the uncertainty concentrates in the limit of data, even ``far-away'' from the training data.
For example, in the region $-3 < x < 3$ of Figure \ref{fig:rff} we are outside of the support of the training data.
We can see for a training set size of 1K the uncertainty interval is wide for SNGP (Figure \ref{fig:sngp_1k}).
However, as we increase the training set size to 1M the uncertainty interval outside of the support of the training data becomes indistinguishable from that given for data inside the support (Figure \ref{fig:due_1m}).
In contrast to SNGP, DUE's inducing point GP preserves the exact GP's properties and has similar uncertainty outside the support of the training data when trained on small and very large datasets.
In contrast to existing DKL methods, DUE avoids feature collapse (Table \ref{table:feature_extractor}), and training DUE is substantially simplified: no pre-training is necessary and there is no computational overhead over a standard softmax model.

DUE outperforms all alternative single forward pass uncertainty methods on the CIFAR-10 vs SVHN detection task.
To evaluate DUE's uncertainty quality for regression on real world data, we use a recently introduced benchmark for predicting treatment effect and uncertainty based treatment deferral in causal models for personalized medicine \citep{jesson2020identifying}.
This benchmark combines the need for accurate predictions and uncertainty in a regression task and we show that DUE outperforms all alternative methods.
We make our code available at \url{https://github.com/y0ast/DUE}.

\begin{figure*}[t]
  \centering
  \begin{subfigure}[c]{.28\linewidth}
    \includegraphics[width=\linewidth]{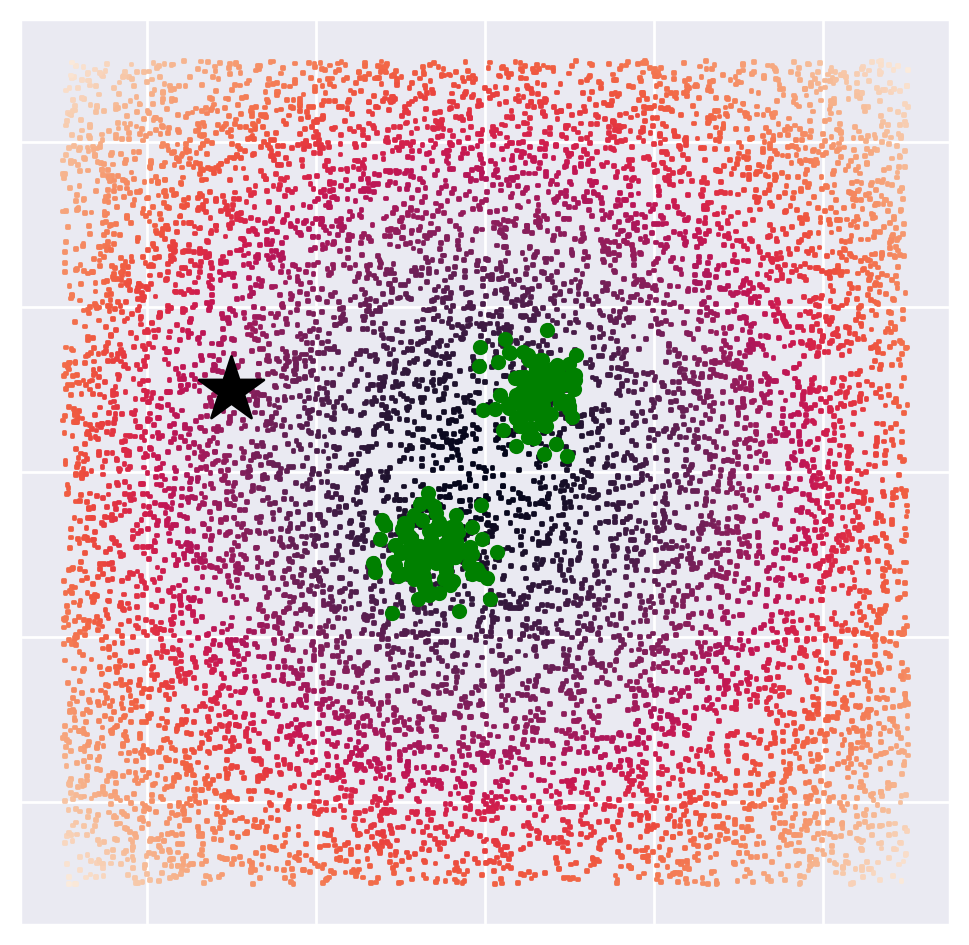}
    \caption{Input}
    \label{fig:feature_collapse_input}
  \end{subfigure}
  \begin{subfigure}[c]{.28\linewidth}
    \includegraphics[width=\linewidth]{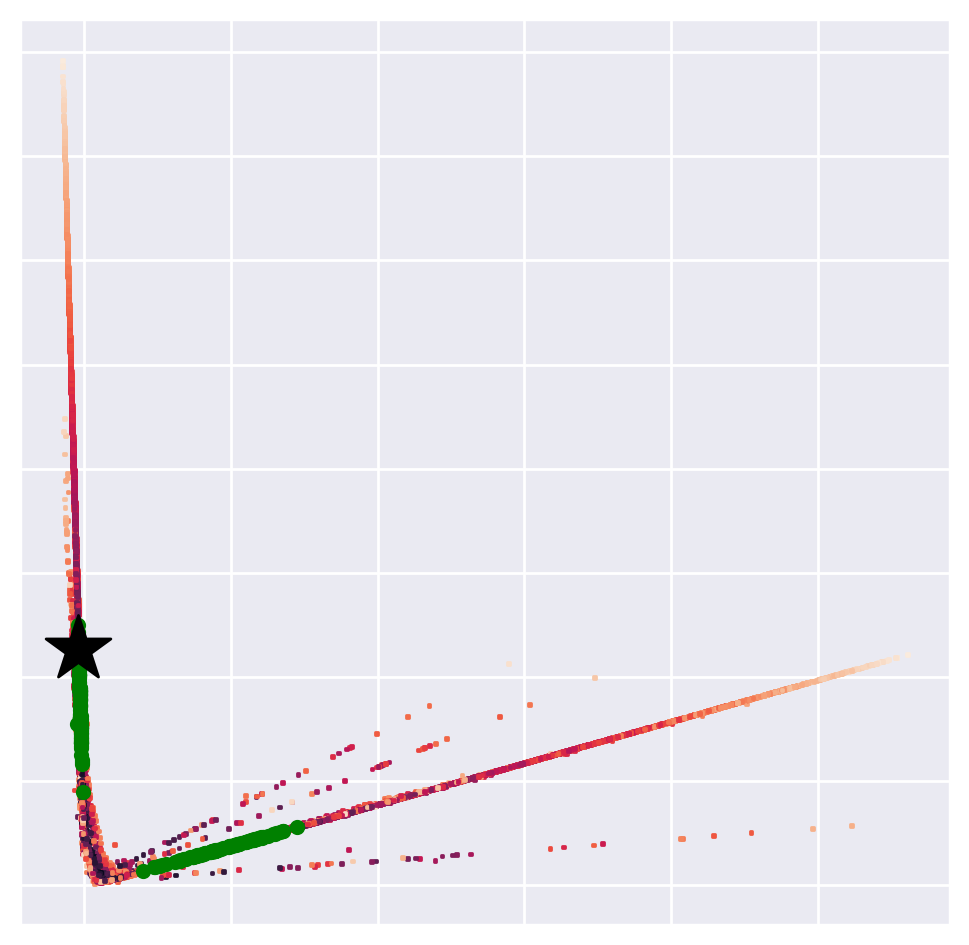}
    \caption{Trained without constraint}
    \label{fig:feature_collapse_no_constraint}
  \end{subfigure}
  \begin{subfigure}[c]{.28\linewidth}
    \includegraphics[width=\linewidth]{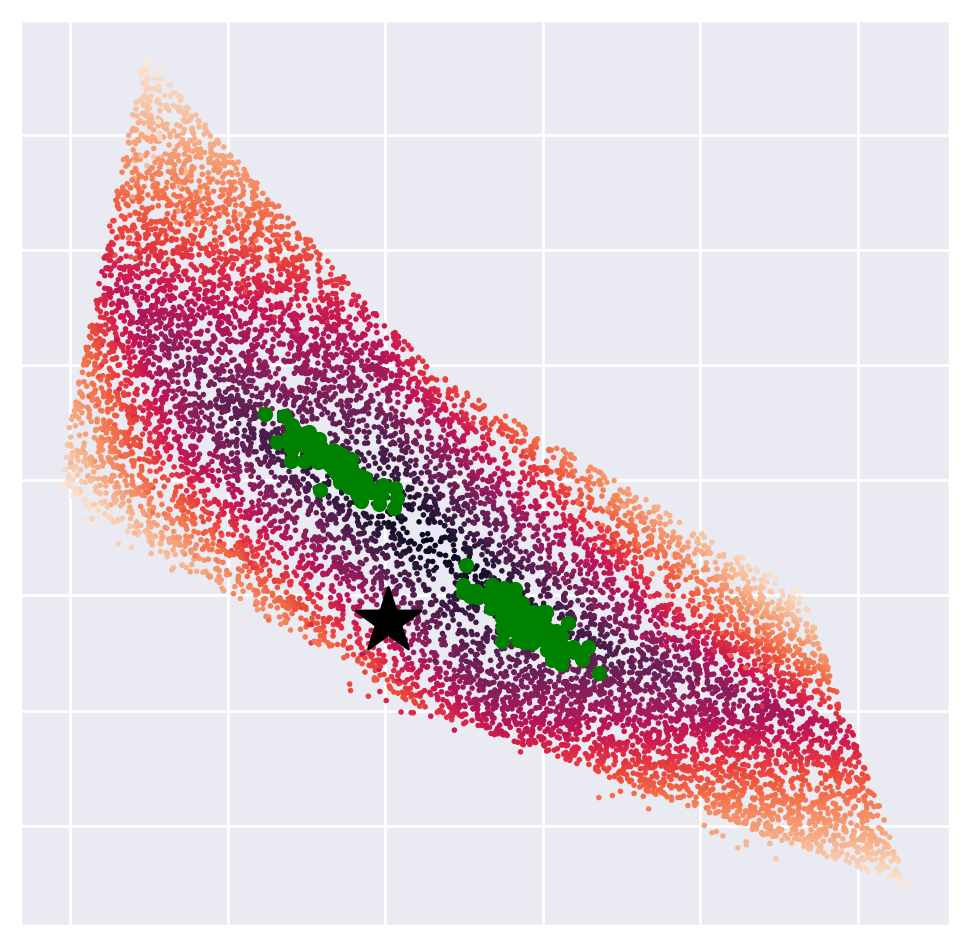}
    \caption{Trained with constraint}
    \label{fig:feature_collapse_with_constraint}
  \end{subfigure}
  \caption{
    A 2D classification task where the classes are two Gaussian blobs (drawn in green), and a grid of unrelated points (colored according to their log-probability under the data generating distribution).
    We additionally mark a specific point with a star.
    In (b), the features as computed by an unconstrained model.
    In (c), the features computed by a model with residual connections and spectral normalization.
    The objective for the unconstrained model introduces a large amount of distortion of the space, collapsing the input to a single line, making it almost impossible to use distance-sensitive measures on these features.
    In particular, the star moves from an unrelated area in input space on top of class data in feature space.
    In contrast, the constrained mapping maintains the relative distances of the other points.
  }
  \label{fig:feature_collapse}
\end{figure*}
In summary, our contributions are as follows:
\begin{enumerate}\itemsep0em
  \item A single forward pass uncertainty model that works well for classification and regression.
  \item Insight into DKL's failures and a principled solution that gives accurate uncertainty estimates.
  \item A practical DKL training setup that trains from scratch with deep feature extractors on a variety of datasets.
\end{enumerate}
\section{BACKGROUND}
\label{section:background}
In DKL, a deep feature extractor is used to transform the inputs over which an inducing point GP is defined.
DKL was originally introduced as a way to combine the expressiveness of deep neural networks with the probabilistic prediction ability of GPs \citep{hinton2008using,calandra2016manifold,wilson2016deep}.
The kernel which contains a deep feature extractor is defined as:
\begin{equation}
  \label{eq:deep_kernel}
  k_{l, \theta}(\vx_i, \vx_j) \rightarrow \bar{k}_l(f_\theta(\vx_i), f_\theta(\vx_j)) ,
\end{equation}
where $f_\theta(\cdot)$ is a deep neural network, such as a Wide ResNet (WRN) \citep{zagoruyko2016wide} up to the last linear layer, parametrized by $\theta$.
The base kernel $\bar{k}_l(\cdot, \cdot)$ can be any of the standard kernels, such as the RBF or Mat\'ern kernel. $l$ represents the hyper-parameters of the base kernel, such as the length scale and output scale.
DKL can be applied to both classification and regression problems.

Inference in an exact GP is bottlenecked by the inversion of an $n \times n$ kernel gram matrix with $n$ the number of data points, which has a time complexity that scales cubically with $n$.
In contrast, an inducing point GP is based on $m$ inducing points (with $m << n$), which act as pseudo-input-points used to approximate the full dataset.
The locations and values of the inducing points are variational parameters, and learned by maximizing a lower bound on the marginal likelihood known as the ELBO \citep{titsias2009variational,hensman2015scalable}.
This reduces the complexity of the matrix inversion from $\mathcal{O}(n^3)$ to $\mathcal{O}(m^2 n)$, thus models with fewer inducing points are faster to train.
In practice, these inducing points are placed in feature space to exploit the clustering behavior of the deep feature extractor, we visualize this behavior in the Appendix Figure \ref{fig:inducing_points}.

The two most prevalent instances of DKL are SV-DKL \citep{wilson2016stochastic} and GPDNN \citep{bradshaw2017adversarial}.
In SV-DKL, an additional restriction is placed on the inducing points to lie on a grid.
This enables faster matrix inversion algorithms, at the expense of a less flexible inducing point structure.
In GPDNN, the inducing points are not constrained, but the feature dimensionality is reduced to just 25, leading to increased risk of feature collapse and poor uncertainty estimation (see Section \ref{subsection:bilipschitz}).
Both methods use a pre-training phase with a standard, non GP, output, and SV-DKL trains with a mini-batch size of 5,000.
GPDNN trains with different optimizers for pre-training and training with GP output, and uses different learning rates for variational versus model parameters.
In summary, these models are more cumbersome to use than alternative methods for uncertainty estimation, such as Deep Ensembles \citep{lakshminarayanan2017simple}.
\section{METHOD}
Here we explain how feature collapse affects DKL's uncertainty, give insight into why DKL's objective leads to feature collapse, and use that insight to mitigate feature collapse by introducing the DUE model.
\subsection{Feature Collapse}
\label{subsection:bilipschitz}
When the deep feature extractor inside the kernel is unconstrained, it can map in- and out-of-distribution data to the same location in feature space.
The GP assigns high confidence to inputs similar to the training data, so when in- and out-of-distribution data have the same feature representation, the GP will assign high confidence to out-of-distribution inputs.
This is called \textit{feature collapse} \citep{van2020uncertainty}, and we visualize the pathology in Figure \ref{fig:feature_collapse}.

The objective in DKL directly encourages feature collapse:  the ELBO (see Appendix Equation \ref{eq:elbo}) consists of an expected log-likelihood term and a KL term (``data fit'' and ``complexity penalty'' respectively \citep{rasmussen2006gaussian}).
At convergence, the ``data fit'' term tends to $-\frac{N}{2}$, leaving only the ``complexity penalty'': $\log |K(f_\theta(X),f_\theta(X)) + \sigma^2 I |$.
Since $\sigma$ depends on the scale of observation noise and cannot usually be set to zero, to minimize the penalty term we must minimize the log determinant of the covariance matrix.
It is this minimization of the term $|K(f(X),f(X))|$ which leads to feature collapse: the determinant tends to zero for feature representations of X that are collinear (intuitively, features which are mapped close together in feature space up to some constant scale).
When optimising the feature extractor $f_\theta$ as part of the objective, this leads to feature extractors that collapse features.
We formalize this statement and provide a proof in Appendix \ref{appendix:complexity_penalty}.
This behavior is discussed in more detail in \citet{ober2021promises}, who also point out that it can lead to worse overfitting than standard maximum likelihood training.

Feature collapse can be reduced by enforcing two constraints on the model: sensitivity and smoothness \citep{van2020uncertainty}.
Sensitivity implies that when the input changes the feature representation also changes.
Thus the model cannot simply collapse feature representations arbitrarily.
Smoothness implies small changes in the input cannot cause massive shifts in the output.

\begin{algorithm}[t]
  \caption{Algorithm for training DUE}
  \label{alg:DUE}
  \begin{algorithmic}[1]
    \STATE \textbf{Definitions:}

    - Residual NN $f_\theta: x \rightarrow \mathbb{R}^J$ with feature space dimensionality J and parameters $\theta$.

    - Approximate GP with parameters $\phi = \{l, s, \omega\}$, where $l$ length scale and $s$ output scale of $\bar{k}$, $\omega$ GP variational parameters (including $m$ inducing point locations $Z$)

    - Learning rate $\eta$, loss function $\mathcal{L}$
    \STATE Using a random subset of $p$ points of our training data, $X^\text{init} \subset X$, compute:

    \textbf{Initial inducing points:} K-means on $f_\theta(X^\text{init})$ with $K=m$. Use found centroids as initial inducing point locations Z in GP.

    \textbf{Initial length scale:} \\ $l = \frac{1}{\binom{p}{2}} \sum_{i=0}^p\sum_{j=i+1}^{p} |f(X_i^{\text{init}}) - f(X_j^{\text{init}})|_2$.

    \FOR{minibatch $\vx_b, \vy_b \subset X, Y$}
    \STATE $\theta' \leftarrow \text{spectral\_normalization}(\theta)$
    \STATE $p(\vy_b'|\vx_b) \leftarrow \text{evaluate\_GP}_\phi(f_{\theta'}(\vx_b))$
    \STATE $\mathcal{L} \leftarrow \text{ELBO}_\phi(p(\vy_b'|\vx_b), \vy_b)$
    \STATE $(\phi, \theta) \leftarrow (\phi, \theta) + \eta * \nabla_{\phi, \theta} \mathcal{L}$
    \ENDFOR
  \end{algorithmic}
\end{algorithm}
These constraints can be achieved by enforcing the feature extractor to be bi-Lipschitz, which has also been shown to help in a large number of other contexts \citep{rosca2020case}.
A bi-Lipschitz feature extractor can be obtained in different ways and each comes with different trade-offs:
\begin{itemize}
  \item \textbf{Two-sided gradient penalty:} this penalty was introduced in the context of GANs \citep{gulrajani2017improved}. It regularizes using a two-sided gradient penalty, penalizing the squared distance of the gradient from a fixed value at every input point. It is easy to implement, but only a soft constraint. In practice the training stability and regularization effectiveness is sensitive to the relative weight of the penalty in the loss. This method is used in DUQ \citep{van2020uncertainty}.
  \item \textbf{Direct spectral normalization and residual connections:} spectral normalization \citep{miyato2018spectral, gouk2018regularisation} on the weights leads to smoothness and it is possible to combine this with an architecture that contains residual connections for the sensitivity constraint. This method is faster than the gradient penalty, and in practice offers a more effective way of mitigating feature collapse. This method is used in SNGP \citep{liu2020simple}.
  \item \textbf{Reversible model:} A reversible model is constructed by using reversible layers and avoiding any down scaling operations \citep{jacobsen2018revnet, behrmann2019invertible}. This approach can guarantee that the overall function is bi-Lipschitz, but it consumes considerably more memory and can be difficult to train.
\end{itemize}
In this work, we use direct spectral normalization and residual connections, as we find it to be more stable than a direct gradient penalty and significantly more computationally efficient than reversible models.
In Figure \ref{fig:feature_collapse}, we show that a constrained model does not collapse points on top of each other in feature space, enabling the GP to correctly quantify uncertainty.
Using a stationary kernel, the model reverts back to the prior away from the training data just like a GP defined over the input space.
\begin{figure*}[t]
  \centering
  \begin{subfigure}[c]{.3\linewidth}
    \includegraphics[width=\linewidth]{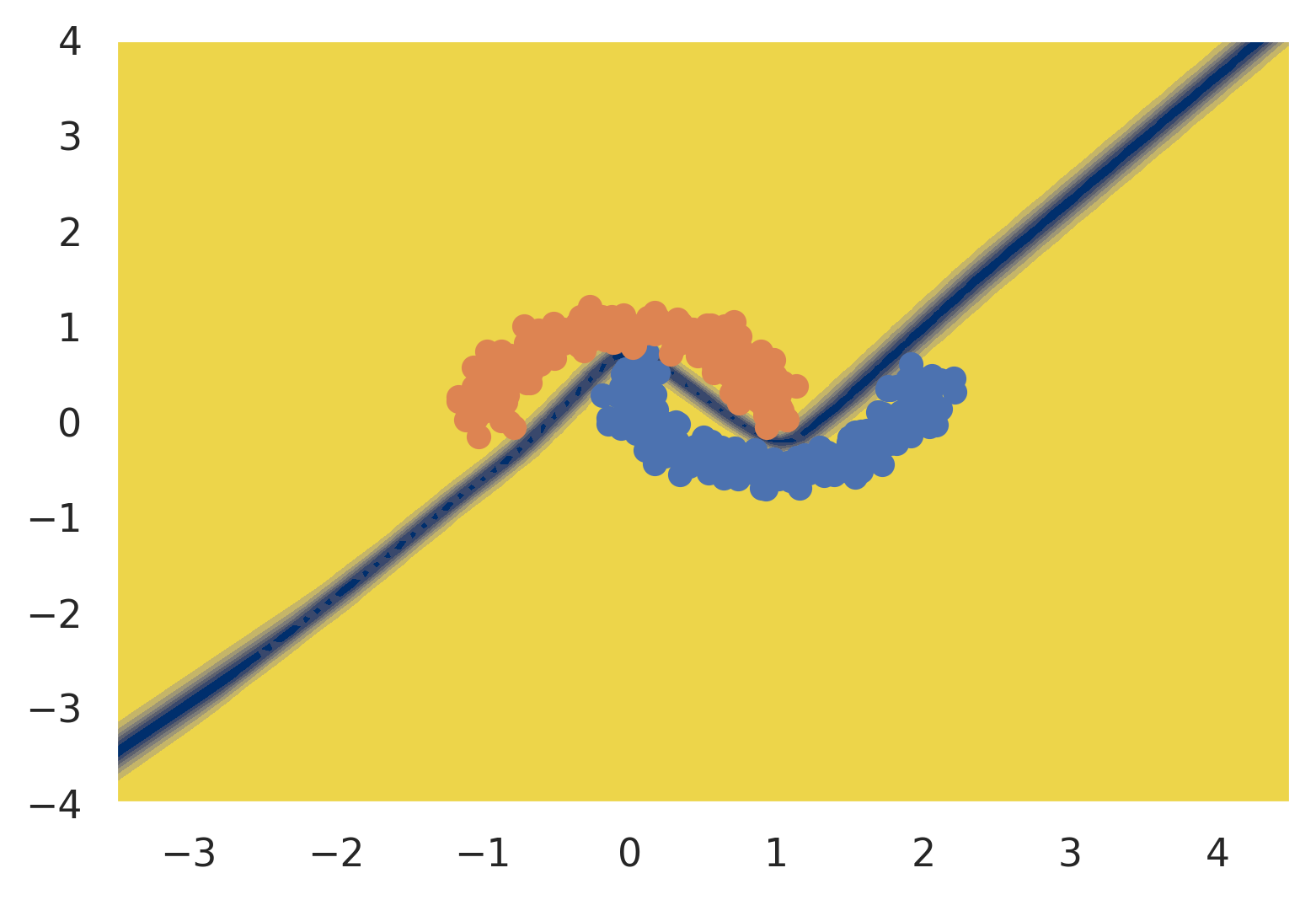}
    \caption{ResNet + Softmax}
    \label{fig:twomoons_softmax}
  \end{subfigure}
  \begin{subfigure}[c]{.3\linewidth}
    \includegraphics[width=\linewidth]{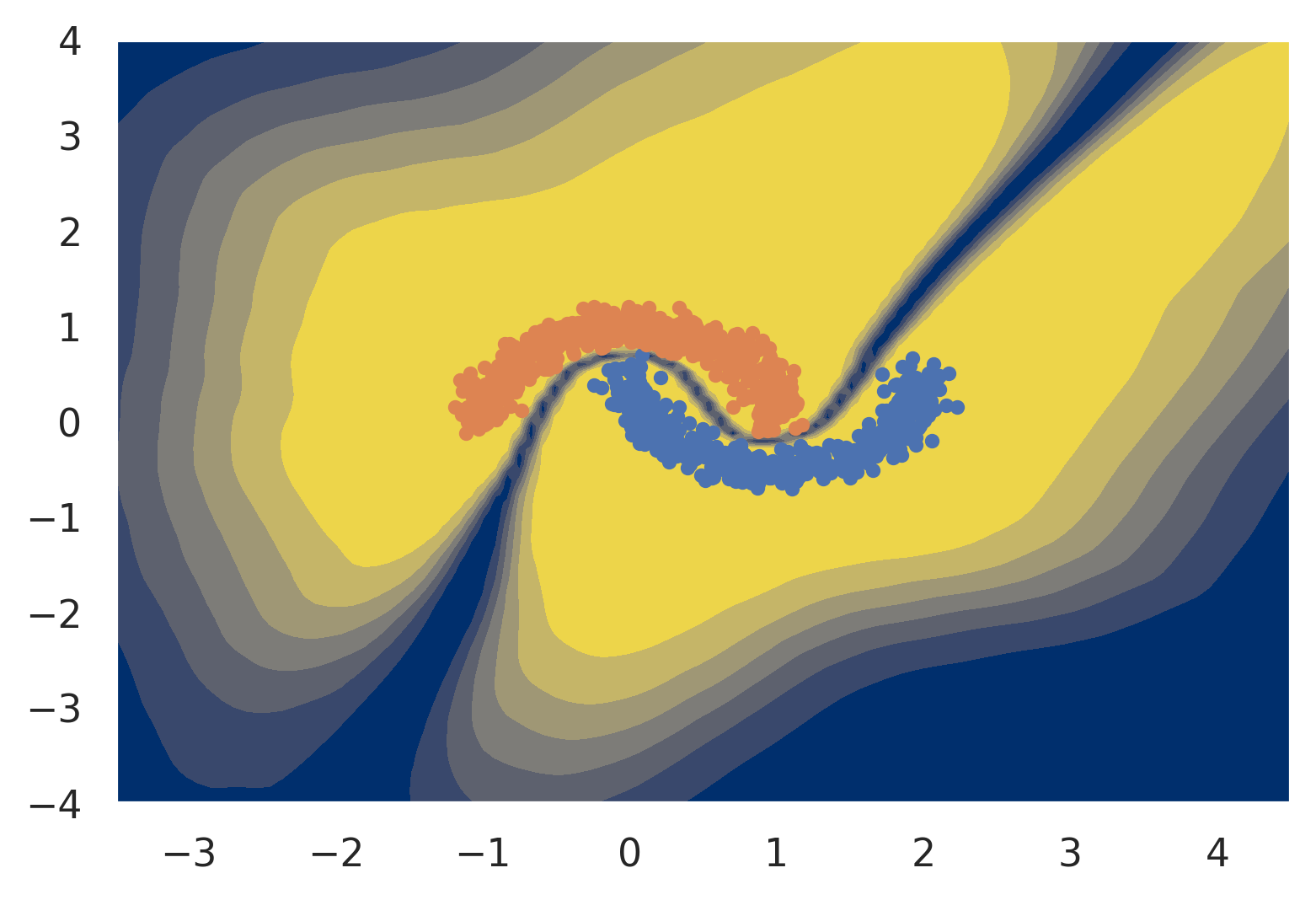}
    \caption{GPDNN \citep{bradshaw2017adversarial}}
    \label{fig:twomoons_ffn}
  \end{subfigure}
  \begin{subfigure}[c]{.3\linewidth}
    \includegraphics[width=\linewidth]{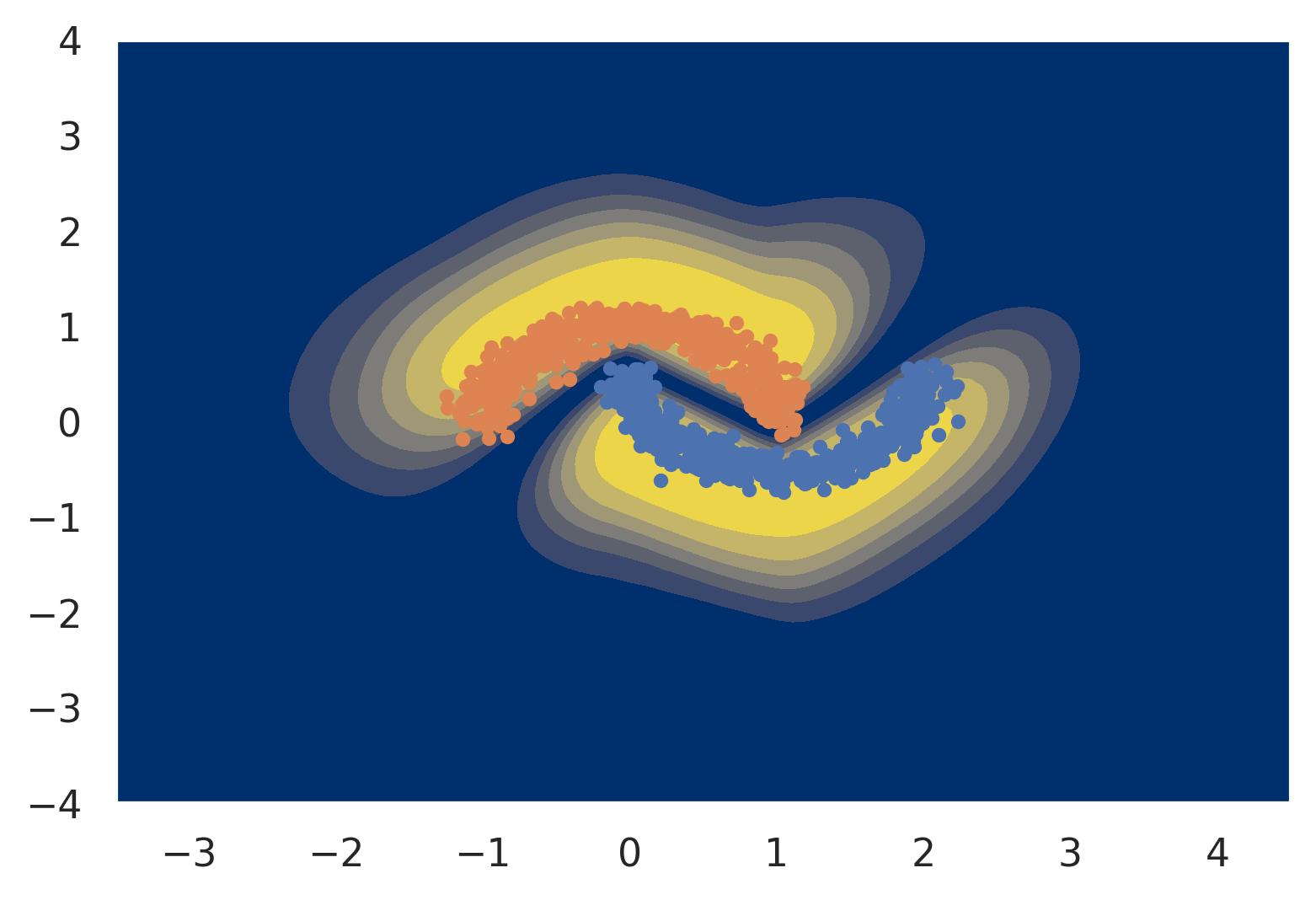}
    \caption{DUE (us)}
    \label{fig:twomoons_DUE}
  \end{subfigure}
  \caption{
    We show uncertainty results on the two moons dataset.
    Yellow indicates high confidence, while blue indicates uncertainty.
    In Figure \ref{fig:twomoons_softmax}, a simple feed-forward ResNet with a softmax output is certain everywhere except on the decision boundary.
    In Figure \ref{fig:twomoons_ffn}, we see that GPDNN, which uses a simple Feed-Forward Network as feature extractor, is certain even far away from the training data.
    In Figure \ref{fig:twomoons_DUE}, we show DUE, which has the appropriate restrictions on the feature extractor (residual connections and spectral normalization) and obtains close to ideal uncertainty on this dataset.
  }
  \label{fig:twomoons}
\end{figure*}
\subsection{Model}
\label{section:method}
\textbf{DUE} builds upon the foundation of GPDNN \citep{bradshaw2017adversarial}.
To enforce the bi-Lipschitz constraint and enable high quality uncertainty, we restrict the deep feature extractor to have residual connections in combination with spectral normalization as described in the previous section.
We further make several simplifications to the training process to make GPDNN more practical to use.
Instead of an additional downsampling layer to 25 dimensions which increases the risk of feature collapse, the GP output is placed directly on top of the last convolutional layer of a large model, which is 640 dimensional in the case of the WRN.
In practice, DUE trains well with just 10 inducing points on CIFAR-10 which leads to a runtime that is only 3\% slower than a softmax model.
No pre-training is necessary and training is stable with a single set of hyper-parameters for both the variational and model parameters.
The inference procedure is described in detail in Appendix \ref{appendix:model_details}.
The training procedure is described in Algorithm \ref{alg:DUE}.

Online power iteration for spectral normalization can be implemented exactly for fully connected layers and 1x1 convolutions.
For convolutions larger than 1x1, we use an approximate method that lower bounds the exact method, as proposed in \citet{gouk2018regularisation} and implemented by \citet{behrmann2019invertible}.

\textbf{Batch Normalization} In SNGP, spectral normalization is applied only on the convolution operations.
However batch normalization, a crucial component of training deep models, has a non-trivial Lipschitz constant.
Batch normalization transforms the input following:
\begin{align}
  x_{out} = \mathrm{diag}\left(\frac{\gamma}{\sqrt{ \mathrm{Var}(x)}}\right)(x - \mathbb{E}[x]) + \beta,
\end{align}
with $\gamma$ and $\beta$ the learnable scale and shift parameters.
It has a Lipschitz constant of $\max_i | \frac{\gamma_i}{\sqrt{\mathrm{Var}(x)}_i}|$ \citep{gouk2018regularisation}.
Using the above equation, we can extend spectral normalization to batch normalization by dividing the weight $\gamma$ of the batch normalization by the (scaled) Lipschitz constant.
In practice, we find that batch normalization layers in trained ResNets have a relatively high Lipschitz constant, up to around 12, and $95\%$ of the channel-wise Lipschitz constants are greater than one (see also Figure \ref{fig:lipschitz_histogram} in the Appendix).
Adding spectral normalization to batch normalization layers ensures the entire network's upper Lipschitz constant is bounded.
\begin{table*}[t]
  \centering
  \caption{
    AUROC on CIFAR10 vs SVHN for two DKL methods trained with feature extractors with and without a bi-Lipschitz constraint.
    Non bi-Lipschitz uses VGG-19 \citep{simonyan2014very}; bi-Lipschitz uses a WRN with spectral normalization.
    Both models obtain high accuracy, matching standard softmax models.
    SV-DKL without bi-Lipschitz obtains poor uncertainty: distinguishing in- and out-of-distribution data is no better than chance.
    GPDNN without bi-Lipschitz obtains better but still poor uncertainty.\linebreak
    The cell highlighted in gray is DUE.
  }
  \label{table:feature_extractor}
  \definecolor{light-gray}{gray}{0.93}
  \begin{tabular}{lcccc}
    \toprule
                                          & bi-Lipschitz                                  &                                       & non bi-Lipschitz &               \\
                                          & AUROC                                         & Accuracy                              & AUROC            & Accuracy      \\ \hline
    SV-DKL \citep{wilson2016stochastic}   & \textbf{0.959$\pm$.001}                       & 95.7$\pm$0.06                         & 0.498$\pm$.001   & 93.6$\pm$0.05 \\
    GPDNN \citep{bradshaw2017adversarial} & \cellcolor{light-gray}\textbf{0.958$\pm$.005} & \cellcolor{light-gray}{95.6$\pm$0.04} & 0.876$\pm$.004   & 93.7$\pm$0.10 \\
    \bottomrule
  \end{tabular}
\end{table*}
\section{RELATED WORK}
\label{sec:related_work}
The single forward pass uncertainty methods most similar to DUE are DUQ \citep{van2020uncertainty} and SNGP \citep{liu2020simple}, as discussed in the introduction and in the context of enforcing the bi-Lipschitz constraint.
Compared to DUQ, DUE can readily be applied to regression problems, and the gradient penalty in DUQ is difficult to optimize and slow to compute.
SNGP \citep{liu2020simple} consists of a deep feature extractor and an output GP, which is approximated using the parametric Random Fourier Features (RFF) approximation \citep{rahimi2008random} in combination with a Laplace approximation for the non-conjugate Softmax likelihood \citep{rasmussen2006gaussian}.
We discuss the trade-offs between the RFF and variational inducing point approximations in the next subsection.
\subsection{Inducing Points versus RFF Approximation}
\label{subsection:approximations}
Inducing point GP approximations maintain the non-parametric properties of the full GP, while the RFF approximation sacrifices this.
With the RFF approximation, for any \emph{finite} number of features, the kernel is approximated as a linear model on a finite number of features.
Because of this, the RFF GP's uncertainty will erroneously concentrate to zero as the number of training points increases even in areas where there is no training data.
Various extensions of RFF (and its close relative, the sparse spectrum GP approximation) have attempted to fix this problem, see for example \citet{gal2015latent}.

Inducing point GP approximations, on the other hand, use as the approximation a standard GP which is defined over a set of \emph{inducing inputs} instead of over the entire training set (which is computationally prohibitive).
These do not change the kernel definition (unlike the RFF approximation), and the GP is still non-parametric.
The inducing point approximation then tries to move the inducing input locations to minimize the KL between this inducing point GP and the full GP \citep{titsias2009variational}, giving a tractable objective which also preserves the full GP's non-parametric properties.
The approximation will become exact when the number of inducing points matches the number of training points (at which point the ELBO can match the GP marginal likelihood by placing the inducing inputs on the training input locations).
This is in contrast to RFF which will only become exact at the infinite limit of the number of random features.
Inducing point GPs also have a tight bound on the marginal log-likelihood, which means that we can do model selection by optimising the ELBO with respect to various hyper-parameters \citep{burt2019rates}.

In Figure \ref{fig:rff}, we show the effects of the approximations in DUE and SNGP in practice by training on a small and large dataset sampled from the same distribution.
SNGP's uncertainty interval is dependent on the number of training points: it is wide in areas where there is no training data at 1K datapoints, but very narrow in the same regions at 1M datapoints.
While DUE's inducing point GP has similar uncertainty outside the support of the training data when trained on either dataset size.
SNGP uncertainty was estimated using the exact method with a ridge penalty of 1.
All other hyper-parameters were shared between the two methods, and are listed in Appendix \ref{appendix:experimental_details}.
\section{EXPERIMENTS}
\subsection{Feature Collapse in DKL}
\label{experiments:twomoons}
We analyze the problem of feature collapse in DKL in Figure \ref{fig:twomoons} on the Two Moons dataset \citep{pedregosa2011scikit}.
We show results for three different models: a standard softmax model, DUE, and a variation where the spectral normalized ResNet is replaced by a fully connected model (similar to \citet{bradshaw2017adversarial}), further details are provided in Appendix \ref{appendix:experimental_details}.
The uncertainty is computed using the predictive entropy of the class prediction; we model the problem as a two class classification problem.

\begin{table*}[t]
  \centering
  \caption{
    Results on the CIFAR-10 dataset, and distinguishing between CIFAR-10 and SVHN by uncertainty.
    All results use a WRN as feature extrator and are the average and standard error of 5 runs.
    The runtime is relative to the ``Standard WRN'' row.
    In bold are top results (within standard error), and the horizontal line separates ensembles from single forward pass methods.
  }
  \label{table:cifar10}
  \begin{tabular}{lccc}
    \toprule
    Method                                                 & Runtime $\downarrow$ & Accuracy (\%)    $\uparrow$ & AUROC $\uparrow$         \\ \hline
    Ensemble of 5 \citep{lakshminarayanan2017simple}       & 5x                   & \textbf{96.6$\pm$0.03}      & \textbf{0.967$\pm$0.005} \\ \hline
    Standard WRN \citep{zagoruyko2016wide}                 & \textbf{1x}          & \textbf{96.2$\pm$0.01}      & 0.932$\pm$0.008          \\
    DUQ \citep{van2020uncertainty}                         & 3.5x                 & 94.9$\pm$0.04               & 0.940$\pm$0.003          \\
    SNGP \citep{liu2020simple}                             & \textbf{1x}          & 96.0$\pm$0.04               & 0.940$\pm$0.006          \\
    SV-DKL (with constraints) \citep{wilson2016stochastic} & 2x                   & 95.7$\pm$0.06               & \textbf{0.959$\pm$0.001} \\
    \textbf{DUE}                                           & \textbf{1x}          & 95.6$\pm$0.04               & \textbf{0.958$\pm$0.005} \\
    \bottomrule
  \end{tabular}
\end{table*}
The standard softmax output model is certain everywhere except on the decision boundary.
DUE (Figure \ref{fig:twomoons_DUE}) quantifies uncertainty as expected for the two moons dataset: certain on the training data, uncertain away from it and in-between the two moons.
Figure \ref{fig:twomoons_ffn} highlights the importance of our contribution, because the uncertainty estimation of a standard Feed-Forward Network in combination with DKL suffers from feature collapse and is certain away from the training data.
\subsection{Feature Collapse in CIFAR-10 versus SVHN}
\label{experiments:cifar_svhn}
We now consider training end-to end with a large feature extractor, the Wide Residual Network (WRN), on CIFAR-10 \citep{krizhevsky2009learning}.
We follow \citet{zagoruyko2016wide} and use a 28 layer model with BasicBlocks and dropout.
Importantly, we can use their hyper-parameters (such as dropout rate, learning rate, and optimizer) when training DUE, and no further tuning is necessary.
We remove the final linear layer of the original WRN and the $640$-dim feature vector is directly used in the GP.
We use 10 inducing points for DUE on CIFAR-10, which leads to a runtime of 1m47s for one epoch on an Nvidia GTX1080 Ti, while a standard WRN with a linear+softmax output takes 1m43s.
An ablation were we look at the learned location of the 10 inducing points is available in the Appendix Figure \ref{fig:inducing_points}, and using more inducing points is evaluated in Table \ref{table:inducing_ablation}.

\textbf{Uncertainty.} To compare the uncertainty quality between different models, we use the experiment of distinguishing between the test sets of CIFAR-10 and SVHN \citep{netzer2011reading} -- a notably difficult dataset pair \citep{nalisnick2018deep} -- using each model's uncertainty metric.
In Table \ref{table:feature_extractor}, we compare SV-DKL and GPDNN with two different feature extractors: with and without the DUE constraints.
In contrast to the original SV-DKL and GPDNN, these models were trained from scratch using the same hyper parameters (including mini-batch-size) as DUE.
Note that GPDNN with constraints and the improved training setup is conceptually the same as DUE.
We see that in both cases the accuracy is high, but the uncertainty is very poor when no constraints are placed.
This highlights the importance of our contribution: additional constraints on the feature extractor are crucial for good uncertainty performance in DKL.

Table \ref{table:cifar10} compares DUE to several baselines.
We train DUE using 10 inducing points and spectral normalization constant 3, set in similar fashion to \citet{liu2020simple} by choosing the lowest value that does not significantly affect accuracy.
All methods compute uncertainty using the predictive entropy, except DUQ which uses the closest kernel distance \citep{van2020uncertainty}.
We see that DUE and SV-DKL, with DUE feature extractor, outperform all competing single forward pass uncertainty methods.
DUE is competitive with Deep Ensembles uncertainty, but only requires a single forward pass and is therefore five times faster to train.
SV-DKL with the DUE feature extractor performs similar to DUE, but takes 2x longer to train, since it needs to invert a larger covariance matrix.
The DUE and SV-DKL result shows that the theoretically preferable inducing point variational approximation also leads to practical improvements, as both outperform SNGP in terms of uncertainty estimation.

\begin{figure}
  \centering
  \begin{subfigure}{0.5\linewidth}
    \centering
    \includegraphics[width=\linewidth]{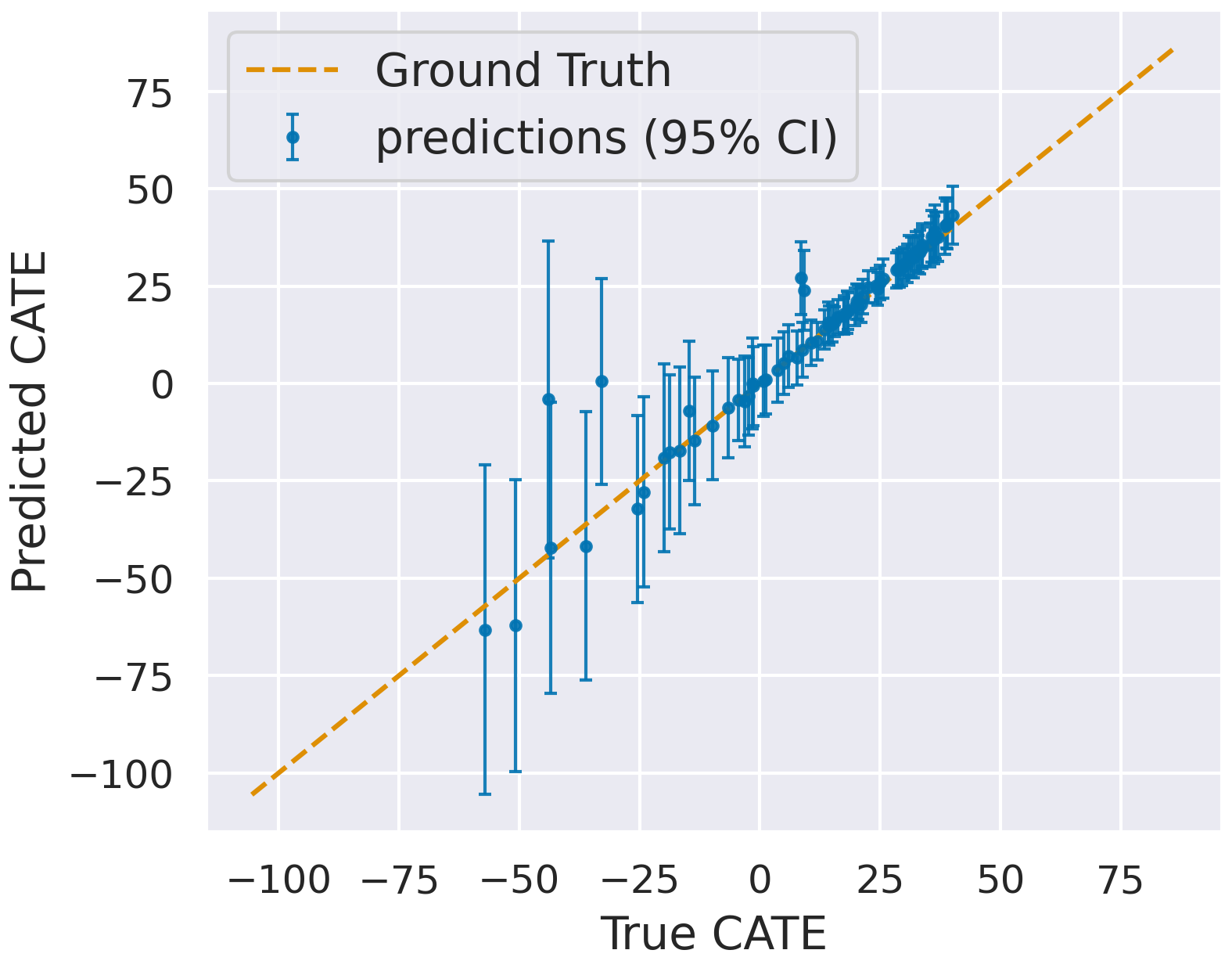}
    \caption{BTARNet}
    \label{fig:causal_btarnet}
  \end{subfigure}%
  \begin{subfigure}{0.5\linewidth}
    \centering
    \includegraphics[width=\linewidth]{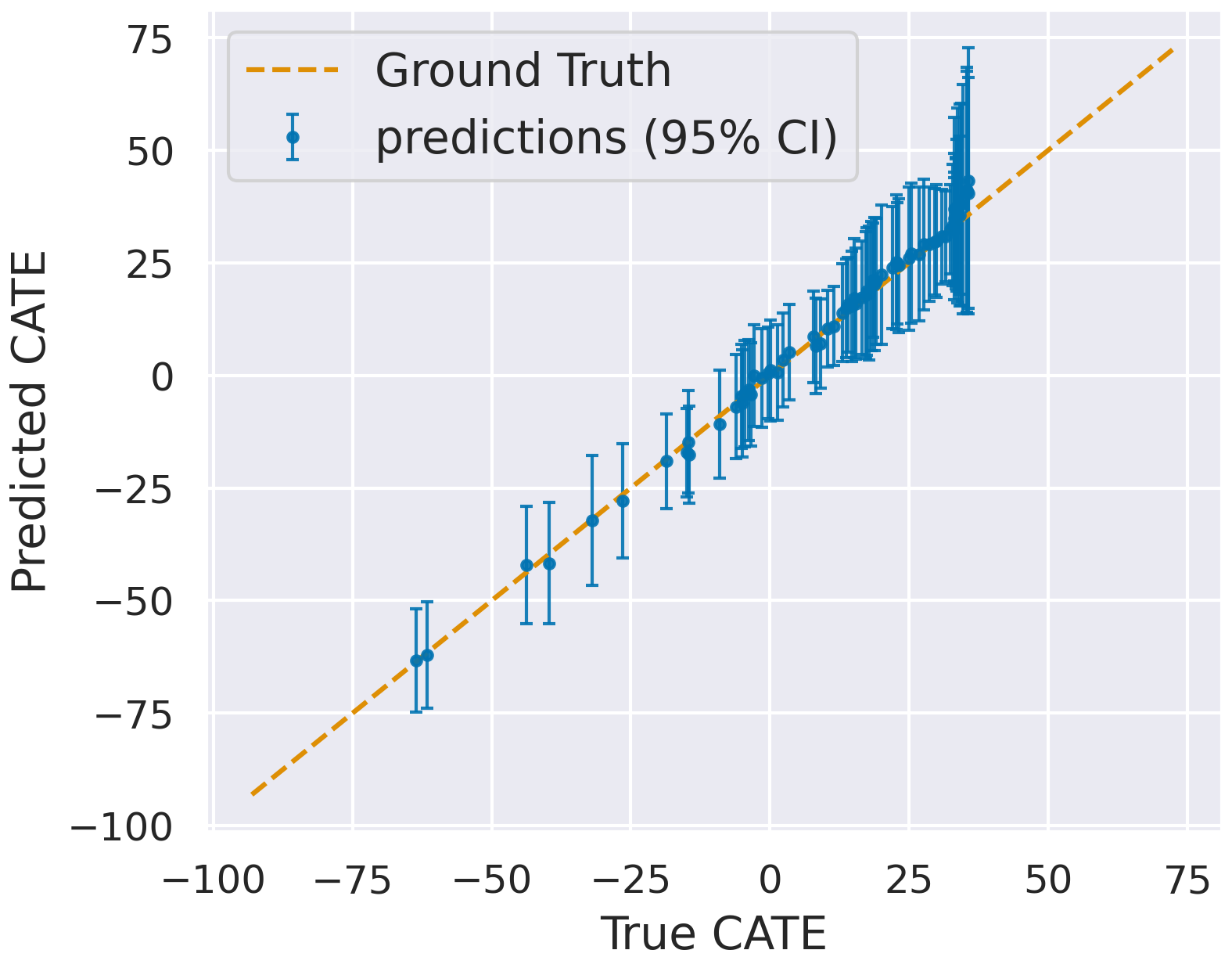}
    \caption{DUE}
    \label{fig:causal_DUE}
  \end{subfigure}
  \caption{
    Predicted CATE versus true CATE with 95\% confidence intervals for a randomly chosen cross-validation run.
    DUE is confident (without converging to no uncertainty) and correct, while BTARNet \citep{shalit2017estimating} is wrong in $2$ instances and the true CATE is not within the confidence interval.
  }
  \label{fig:causal_uncertainty}
\end{figure}

\begin{table*}[t]
  \setlength\tabcolsep{3pt} 
  \centering
  \caption{
    Comparing the performance in terms of RMSE of several treatment effect and uncertainty estimation models under ``random'' and ``uncertainty'' based deferral to an expert (50\% and 10\% deferred for IHDP Cov. and IHDP respectively).
    The first three rows were obtained from \citet{jesson2020identifying}, DKLITE was evaluated using the author's open source implementation and the ensemble of TARNet was reimplemented.
    DUE outperforms all alternative methods, while being 5x faster to train and evaluate than the ensemble.
  }
  \begin{tabular}{lcc|cc}\toprule
                                           & \multicolumn{2}{c|}{IHDP Cov. (50\% def.)} & \multicolumn{2}{c}{IHDP (10\% def.)}                                                  \\
    Method                                 & \emph{random}                              & \emph{uncertainty}                   & \emph{random}         & \emph{uncertainty}     \\ \midrule

    BART       \citep{chipman2010bart}     & 2.6$\pm$.2                                 & 1.8$\pm$.2                           & 1.90$\pm$.20          & 1.60$\pm$.10\          \\
    BTARNet   \citep{shalit2017estimating} & 2.2$\pm$.3                                 & 1.2$\pm$.1                           & 1.10$\pm$.03          & 0.76$\pm$.03\          \\
    BCEVAE \citep{louizos2017causal}       & 2.5$\pm$.2                                 & 1.7$\pm$.1                           & 1.80$\pm$.06          & 1.47$\pm$.08\          \\
    DKLITE \citep{zhang2020learning}       & 2.6$\pm$.7                                 & 1.8$\pm$.5                           & 1.74$\pm$.53          & 1.34$\pm$.41\          \\
    Ensemble of 5 TARNet                   & 1.74$\pm$.1                                & 1.19$\pm$.03                         & 1.14$\pm$.04          & 0.76$\pm$.01           \\
    \textbf{DUE}                           & \textbf{1.63$\pm$.06}                      & \textbf{1.05$\pm$.05}\               & \textbf{0.91$\pm$.04} & \textbf{0.48$\pm$.02}\ \\ \bottomrule
  \end{tabular}
  \label{table:causal}
\end{table*}
\subsection{Uncertainty in Regression for Personalized Healthcare}
\label{section:causal}
To demonstrate the performance of DUE on a regression task, we focus on a new benchmark on personalized healthcare \citep{jesson2020identifying}.
The task is to predict responses of individuals to a particular treatment, which is only possible if there is sufficient data available to assess how they might respond.
This is also called the \emph{overlap assumption}, which states that to predict the effect of treatment for a particular input $x$ (an individual described by features), we need to have seen similar data points that have received treatment as well as points that have not received treatment.
We can use uncertainty to assess when this assumption is violated and the patient should be referred to an expert instead \citep{jesson2020identifying}.
It is important that the uncertainty estimates are accurate, otherwise an individual could receive treatment even if their response to treatment is not known, which can result in undue stress, financial burdens, or worse.

This benchmark assesses both predictive performance and uncertainty estimation.
Other benchmarks which only evaluate test log-likelihood, such as UCI \citep{dua2017uci}, only capture uncertainty on the test set (in-distribution), which is insufficient to evaluate uncertainty on out-of-distribution data.
In the case of feature collapse, the test log-likelihood is therefore not a useful comparison tool.

We train DUE on the input features of the training set, appending a $0$ or $1$ to represent no treatment and treatment.
The model is then used to predict the Conditional Average Treatment Effect (CATE) \citep{abrevaya2015estimating}, computed as the difference between the expected effect of treatment and no treatment.
The CATE estimate and its uncertainty are the expectation and variance over the difference in joint predictions with the two inputs:
\begin{align*}
  [y_0, y_1]              \sim                         & \text{ DUE}([\vx, t=0], [\vx, t=1])                           \\
  \text{CATE}(x)               = \mathbb{E}[y_1 - y_0] & \qquad       \text{CATE}_\text{u}(x) = \text{Var}[y_1 - y_0].
\end{align*}
The CATE can be computed exactly, using the mean of the GP posterior, while we use Monte Carlo sampling of the joint posterior for the uncertainty (note that this requires only a single forward pass through the model, and sampling from the GP is fast).
We use the uncertainty to decide which predictions will be deferred to an expert.
This process allows us to make a \emph{causal} statement on the effect of treatment, under the assumptions in \citet{jesson2020identifying}.

We use the IHDP \citep{hill2011bayesian} and IHDP Covariate shift (referred to as IHDP Cov.) datasets.
IHDP is a regression datasets with $\sim$750 data points, and IHDP Cov. is a variant with additional covariate shift to increase the difficulty of the task.
These are real world datasets derived from the Infant Health and Development Program.
The details of the covariate shift are discussed in \citet{jesson2020identifying}.
In the experiments, treatment-effect recommendations are deferred to an expert if the CATE estimate has high uncertainty.
We include a baseline that defers at random.
We run IHDP and its covariate shift variant for 1,000 cross-validation trials.

In Figure \ref{fig:causal_uncertainty}, we compare DUE with Bayesian TARNet, which is the standard TARNet \citep{shalit2017estimating} extended with MC dropout \citep{gal2016dropout} for uncertainty quantification.
The TARNet is a commonly used deep learning baseline in the causality field.
The results show that DUE is more accurate than BTARNet for most CATE values, and that the BTARNet makes predictions for which the ground truth is not within the confidence interval.
Table \ref{table:causal} summarizes our results compared to several baselines (detailed in Appendix \ref{appendix:causal_experiments}) and shows that DUE has improved performance and uncertainty estimates better suited to rejection policies than other uncertainty-aware methods.
\section{Conclusion and Limitations}
\label{sec:limitations}
We demonstrated that DUE outperforms alternative single forward pass methods on CIFAR-10, and obtains SotA performance in a personalized medicine regression benchmark.
These results show that DUE overcomes the previous problems with uncertainty in DKL, and makes DKL a viable method for uncertainty estimation and a practical tool for improving reliable AI.
The main limitation of DUE is that despite the empirical improvements, the uncertainty estimation is not guaranteed to be correct.
In particular, the downsampling operations are not bi-Lipschitz and future work is necessary to assess the practical implications of this.
A potential negative societal impact of this work is that if DUE is used in sensitive applications and incorrectly estimates its uncertainty, then decisions might be made in an automated fashion that should have been deferred to an expert.
We believe this risk is smaller than with other models, but should nevertheless be evaluated within the applied setting.

\clearpage

\subsubsection*{Acknowledgements}

The authors would like to thank the members of OATML, OxCSML and anonymous reviewers for their feedback during the project.
In particular, we would like to thank Tim Rudner, Mark van der Wilk, Sebastian Farquhar, Bas Veeling, Andreas Kirsch, and Luisa Zintgraf for fruitful discussions and their suggestions.
JvA/LS are grateful for funding by the EPSRC (grant reference EP/N509711/1 and EP/L015897/1, respectively).
JvA is also grateful for funding by Google-DeepMind.

\bibliographystyle{plainnat}
\bibliography{due}

\clearpage
\onecolumn
\appendix

\begin{figure}[t]
  \begin{minipage}{.48\textwidth}
    \centering
    \includegraphics[width=\linewidth]{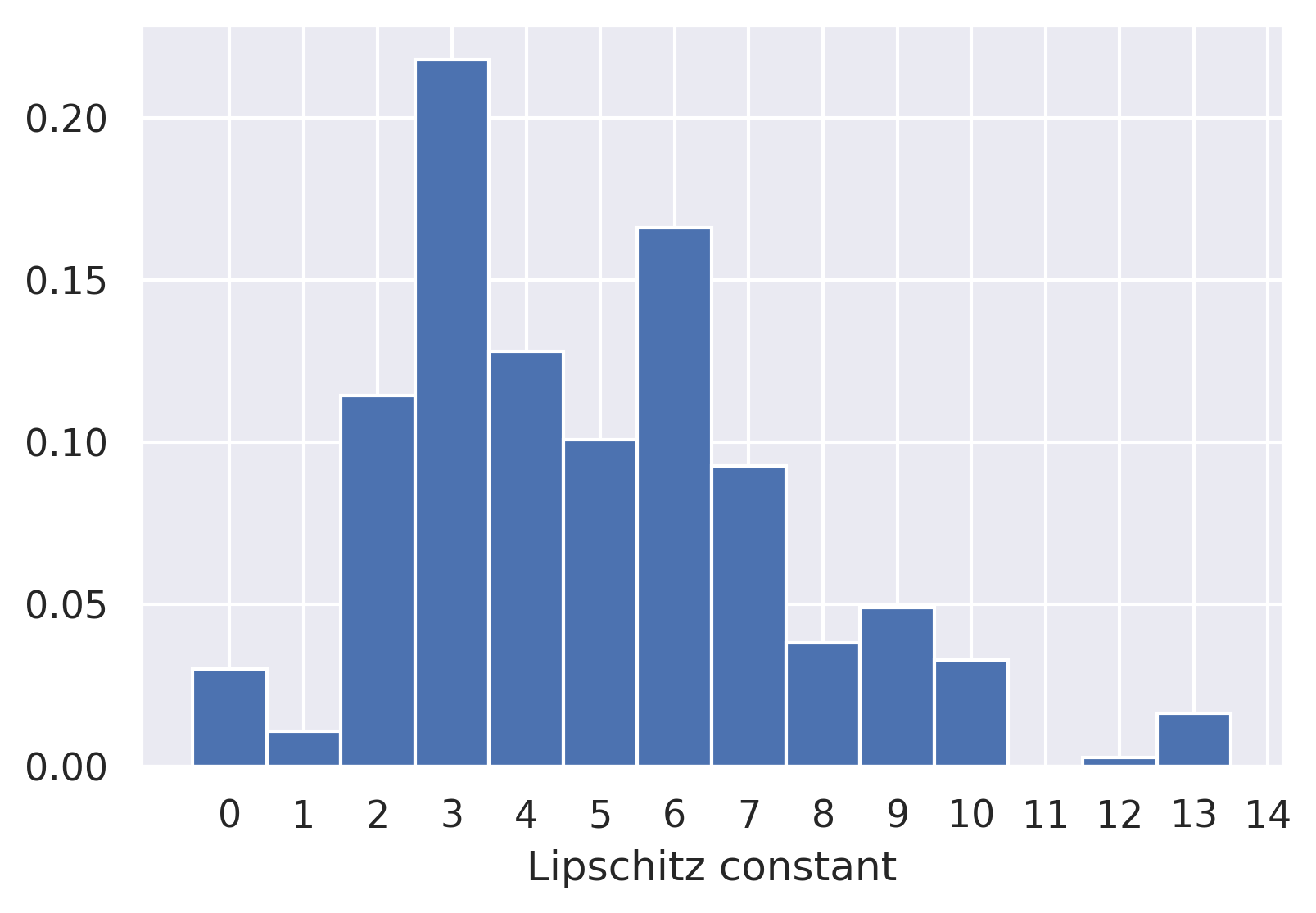}
    \captionof{figure}{
      A density of the Lipschitz values in batch normalization layers, averaged across 15 WRN models that were trained with Softmax output and without spectral normalization (exactly following \citet{zagoruyko2016wide}).
      We see that many of the constants are significantly above 1, highlighting that batch normalization has significant impact on the Lipschitz constant of the network.}
    \label{fig:lipschitz_histogram}
  \end{minipage}\hfill
  \begin{minipage}{.48\textwidth}
    \centering
    \includegraphics[width=\linewidth]{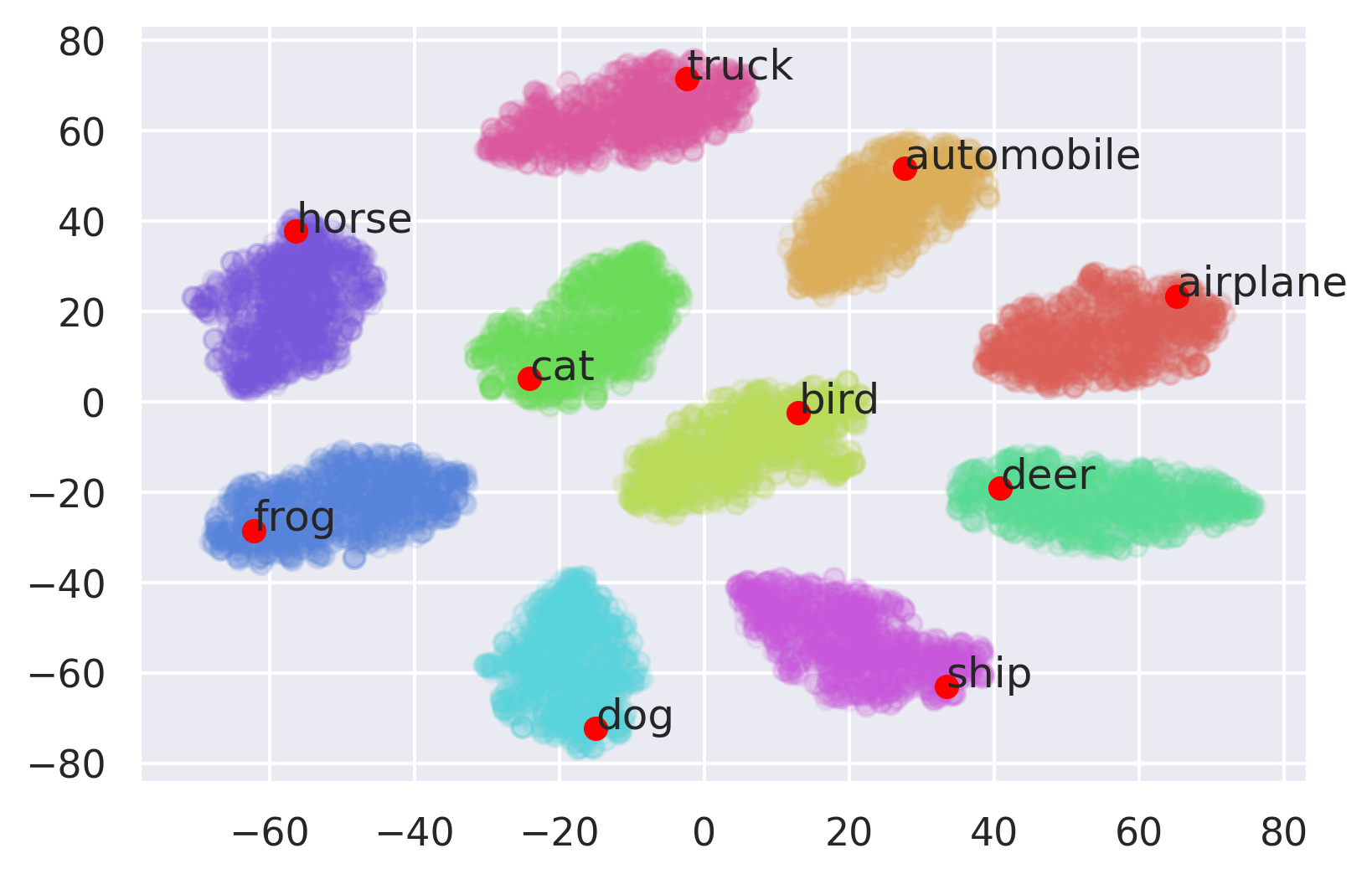}
    \captionof{figure}{
      Visualization of the learned feature representations of the training set of CIFAR-10.
      We use T-SNE \citep{van2008visualizing} to reduce the dimensionality down to 2D and we color the points by their class label.
      We overlay (in red) the inducing points location, which we label by computing the closest class average T-SNE representation.
      The features and inducing point locations are obtained from a DUE model trained with 10 inducing points and spectral normalization.
    }
    \label{fig:inducing_points}
  \end{minipage}
\end{figure}

\section{MODEL DETAILS}
\label{appendix:model_details}
DUE is an instance of DKL \citep{wilson2016deep} and uses the sparse GP of \citet{titsias2009variational} and the variational approximation of \citet{hensman2015scalable}.
In this section we give a complete description of the model.

\subsection{Full Model Definition}
Let $X \in \R^{N \times D}$ and $Y \in \R^{N \times T}$ be a dataset of $N$ points with input dimensionality $D$ and output dimensionality $T$.
In addition to regression, we also consider classification tasks.
For classification tasks, $T$ is the number of classes, with a single instance $\bm{y} \in Y$ being a $T$ dimensional vector of class probabilities.
Let $F \in \R^{N \times T}$ be the value of a $T$ dimensional latent function at each input.
For regression tasks $F$ is the values of the underlying noiseless function the GP is modelling.
The model is formed of an independent GP for each output dimension.
The joint distribution over $Y$ and $F$, evaluated at inputs $X$, is
\begin{align}
  p(Y, F; X)      & = p(Y \given F) \prod_{t=1}^T p(F_{[:,t]}; X) \\
  p(F_{[:,t]}; X) & = \GP(\mu_t(X), k_{l_t, \theta}(X, X)).
\end{align}
Here $F_{[:,t]}$ refers to the $t$th column vector of the matrix $F$.
$\mu_t(\cdot)$ is a mean function for output dimension $t$.
For both regression and classification we use a constant mean function $\mu_t(X) = \mu_t$, where $\mu_t$ is a hyperparameter.
$k_{l, \theta}(\cdot, \cdot)$ is a deep kernel function with feature extractor parameters $\theta$, as we discuss in Section \ref{section:method}, and base kernel $\bar{k}_{l_t}(\cdot, \cdot)$ with hyperparameters $l_t$ specific to each output dimension (but shared for each input dimension).
$p(Y \given F)$ is the likelihood function.
For regression tasks this is defined $p(Y \given F) = \prod_{i=1}^N \Normal(Y_{[i,:]} \given F_{[i,:]}, \sigma^2 I_T)$, where $\sigma^2$ is a variance hyparameter and $I$ is the identity matrix.
For classification tasks,
\begin{align}
  p(Y \given F)                 & = \prod_{i=1}^N p(Y_{[i,:]} \given F_{[i,:]})                \\
  p(Y_{[i,:]} \given F_{[i,:]}) & = \operatorname{softmax}(F_{[i,:]})_{(\argmax_c Y_{[i,c]})}.
\end{align}
Note that while $\mu_t$, $\sigma^2$, and $l_t$ are described as hyperparameters, we do not specify them manually but instead learn them alongside the other model parameters, as below.

\begin{table}[t]
  \begin{minipage}{.45\linewidth}
    \centering
    \caption{
      Test accuracy and Negative Log-Likelihood (NLL) on CIFAR-10 of DUE with WRN feature extractor for increasing number of inducing points ($m$).
      As the number of inducing points increases, both the NLL and accuracy remain constant with no statistically significant difference.
      This shows that is not necessary to have a large number of inducing points to obtain strong performance.
    }
    \label{table:inducing_ablation}
    \begin{tabular}{lcc}
      \toprule
      $m$  & Accuracy (\%) $\uparrow$ & NLL    $\downarrow$ \\ \hline
      10   & 95.56$\pm$0.04           & 0.187$\pm$0.004     \\
      50   & 95.54$\pm$0.06           & 0.182$\pm$0.002     \\
      100  & 95.35$\pm$0.06           & 0.183$\pm$0.002     \\
      1000 & 95.49$\pm$0.05           & 0.180$\pm$0.001     \\
      \bottomrule
    \end{tabular}
  \end{minipage}\hfill
  \begin{minipage}{0.45\linewidth}
    \centering
    \caption{ECE calibration results on CIFAR-10 with 15 bins and no post-hoc scaling. DUQ and SNGP results obtained from \citet{liu2020simple}.}
    \label{table:calibration}
    \begin{tabular}{ll}
      \toprule
      Method & ECE                 \\ \hline
      DUE    & 0.01795  +/- 0.0015 \\
      DUQ    & 0.034 +/- 0.002     \\
      SNGP   & 0.018 +/- 0.001     \\
      \bottomrule
    \end{tabular}
  \end{minipage}
\end{table}

\subsection{Sparse GP Approximation and Variational Inference}
Exact inference for the classification likelihood is not tractable because the softmax function is not a conjugate likelihood to the GP prior.
Additionally, while exact inference is possible for the regression case, the computational complexity scales cubically with the number of data points, thus it is not suitable for large datasets.
Thus, for both regression and classification we use a sparse GP approximation and variational inference.

We use the sparse GP approximation of \citet{titsias2009variational} which augments the model with $M$ inducing inputs, $Z \in \R^{M \times J}$, where $J$ is the dimensionality of the feature space.
The associated inducing variables, $U \in \R^{M \times T}$, give the function value at each inducing input.
Together, the inducing inputs and inducing variables approximate the full dataset.
To perform inference in this model we use the variational approximation introduced by \citet{hensman2015scalable}.
Here $Z$ are treated as variational parameters.
$U$ are random variables with prior $p(U) = \prod_{t=1}^T \Normal(U_{[:,t]} \given \mu_t(Z), \bar{k}(Z,Z))$, and variational posterior $q(U) = \prod_{t=1}^T \Normal(U_{[:,t]} \given \bm{m}_t, S_t)$, where $\bm{m}_t \in \R^{M}$ and $S_t \in \R^{M \times M}$ are variational parameters and initialized at the zero vector and the identity matrix respectively.
The approximate predictive posterior distribution at test points $X^*$ is then
\begin{equation}
  q(F^* \given Y; X, X^*) = \int p(Y \given F^*) p(F^* \given U; X^*, Z) \prod_{t=1}^T q(U_{[:,t]} \given \bm{m}_t, S_t) \diff U \diff F^*.
\end{equation}
Here $p(F^* \given U; X^*, Z)$ is a Gaussian distribution for which we have an analytic expression, see \citet{hensman2015scalable} for details.
Note that we deviate from \citet{hensman2015scalable} in that our input points $x$ are mapped into feature space just before computing the base kernel, while inducing points are used as is (they are defined in feature space).

The variational parameters $Z$, $\bm{m}_t$, and $S_t$, alongside the feature extractor parameters $\theta$ and model hyparparameters $\mu_t$, $l_t$, and $\sigma^2$, are all learned by maximizing a lower bound on the log marginal likelihood, known as the ELBO, $\mathcal{L}$.
For the variational approximation above, this is defined as
\begin{equation}
  \label{eq:elbo}
  p(Y; X) \geq
  \mathcal{L} = \sum_{i=1}^N
  \E_{q(F_{[i,:]} \given \bm{m}_1, \ldots, \bm{m}_T, S_1, \ldots, S_T; \bm{x}_i, Z)}
  \left[ \log p(Y_{[i,:]} \given F_{[i,:]}) \right]
  - \KL(q(U) || p(U)).
\end{equation}
Both terms can be computed analytically when the likelihood is Gaussian and for classification (i.e. a non Gaussian likelihood) we do MC sampling.
Armed with this objective function, we can learn the model parameters and hyperparameters, and variational parameters, using stochastic gradient descent.
To accelerate optimization we additionally use the whitening procedure of \citet{matthews2017scalable}.
We specify the precise optimizer configuration for each experiment later in this section.

\subsection{Making Predictions and Measuring Uncertainty}
For regression tasks we directly use the function values $F^*$ above as the predictions.
We use the mean of $q(F^* \given Y; X, X^*)$ as the prediction, and the variance as the uncertainty.

For classification tasks we need the posterior over the class probabilities, $q(Y^* \given Y; X, X^*)$, rather than the latent function values.
Thus, we approximate the integral
\begin{equation}
  \bar{Y}^* = \int \operatorname{softmax}(F^*) q(F^* \given Y; X, X^*) \diff F^*,
\end{equation}
using Monte Carlo samples (32 in practice), which are very fast to compute and do not require additional forward passes.
Note that we consider the inputs \emph{independent} at test time (i.e. we only take the diagonal of the posterior covariance) which is especially important when detecting out of distribution data.

The predicted class for input $i$ is then the most likely class in $\bar{Y}^*_{[i,:]}$.
To estimate the uncertainty of the prediction we compute the entropy of $\bar{Y}^*_{[i,:]}$:
\begin{equation}
  \operatorname{entropy}(\bar{Y}^*_{[i,:]}) = - \sum_c \bar{Y}^*_{[i,c]} \log \bar{Y}^*_{[i,c]} .
\end{equation}

\subsection{Complexity Penalty and Feature Collapse}
\label{appendix:complexity_penalty}

In this section we show that the DKL objective can lead to feature collapse for both exact and inducing point GPs.
In \citet{ober2021promises}, it was shown that the complexity penalty (Equation \ref{eq:marginal_likelihood}) can lead to overfitting in DKL.
We extend upon this result by linking the penalty to feature collapse.
Additionally, we provide evidence for the fact that this pathology also occurs in inducing point GPs.

\begin{prop}
  \label{prop:feature_collapse}
  The marginal likelihood of a GP with a neural feature extractor, i.e. with a kernel function $k(f_\theta(\cdot), f_\theta(\cdot))$ (DKL) where $f$ is a deep neural network parameterised by $\theta$, can be made arbitrarily large if the feature extractor $f_\theta$ is allowed to map data points arbitrarily close together.
\end{prop}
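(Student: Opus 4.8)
The plan is to work with the exact-GP marginal likelihood for the Gaussian (regression) likelihood, where everything is closed-form, and then transfer the conclusion to the inducing-point ELBO. Writing $K = K(f_\theta(X), f_\theta(X))$ for the kernel gram matrix of the features, a constant mean $\mu\mathbf{1}$, and observation noise $\sigma^2$, the log marginal likelihood is
\[
  \log p(Y; X) = -\tfrac12 (Y-\mu\mathbf{1})^\top (K+\sigma^2 I)^{-1}(Y-\mu\mathbf{1}) - \tfrac12 \log\lvert K + \sigma^2 I\rvert - \tfrac{N}{2}\log 2\pi ,
\]
which splits into the data-fit term and the complexity penalty $\tfrac12\log\lvert K+\sigma^2 I\rvert$ referenced in Section~\ref{subsection:bilipschitz}. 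I would first record the elementary bound that, for a stationary base kernel and \emph{fixed} $\sigma^2>0$, this quantity is bounded above (the data-fit term is $\le 0$ and $\lvert K+\sigma^2 I\rvert\ge\sigma^{2N}$). This already tells me that driving the likelihood to $+\infty$ must combine feature collapse with sending $\sigma^2\to 0$, and it pins down exactly which term does the work: the log-determinant.

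The core of the construction is to exploit that an expressive $f_\theta$ can realise essentially any configuration of the $N$ feature vectors on the finite training set. I would collapse the features so that $K$ becomes (in the limit) rank-deficient --- e.g. by sending a group of feature vectors to a common location, which makes the corresponding rows of $K$ coincide and forces eigenvalues to $0$ --- while arranging the collapse so that its null direction is orthogonal to the centred targets $Y-\mu\mathbf{1}$ (equivalently, collapsing features of points that share a target value, with $\mu$ chosen accordingly). Under this range condition the quadratic data-fit form converges to the finite value $\tfrac12 (Y-\mu\mathbf{1})^\top K^{+}(Y-\mu\mathbf{1})$ as $\sigma^2\to 0$, whereas each collapsed eigenvalue contributes a factor $\sigma^2\to 0$ to $\lvert K+\sigma^2 I\rvert$, so the complexity penalty $-\tfrac12\log\lvert K+\sigma^2 I\rvert\to +\infty$. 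Taking the joint limit (features collapsing, with $\sigma^2\to 0$ coordinated so the data-fit term stays bounded) then yields $\log p(Y;X)\to+\infty$.

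Finally I would transfer this to the inducing-point case. The variational ELBO of Equation~\ref{eq:elbo} is a lower bound on $\log p(Y;X)$ that becomes tight when the inducing inputs coincide with the training inputs, so the exact-GP blow-up immediately lower-bounds the achievable ELBO; alternatively one can argue directly, since the $\KL(q(U)\,\|\,p(U))$ term carries a $\log\lvert K(Z,Z)\rvert$ contribution that exhibits the same log-determinant pathology when the inducing features collapse. The step I expect to be the main obstacle is the coordination in the joint limit: arbitrary collapse is not enough, because if the null space of $K$ is not orthogonal to $Y-\mu\mathbf{1}$ the data-fit term diverges like $1/\sigma^2$ and overwhelms the $\log(1/\sigma^2)$ gain from the determinant. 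Making the argument clean therefore hinges on choosing a collapse that is compatible with the targets (the range/alignment condition) and on controlling the relative rates of the vanishing eigenvalues and of $\sigma^2$.
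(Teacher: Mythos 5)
Your proposal is correct and rests on the same basic mechanism as the paper's proof: decompose the log marginal likelihood of Equation~\ref{eq:marginal_likelihood} into a data-fit term and a complexity penalty, then drive $-\tfrac12\log\lvert K+\sigma^2 I\rvert$ to $+\infty$ by collapsing feature representations (so $K$ degenerates) while sending the noise to zero. Where you genuinely differ is in how the data-fit term is neutralised. The paper invokes Lemma~\ref{lem:ober} of \citet{ober2021promises}: after reparameterising $\hat\sigma_n=\sigma_n/\sigma_f$ and setting the output scale optimally, the data-fit term is pinned at $-N/2$, leaving only the log-determinant in play. You instead keep the output scale fixed and impose an explicit alignment condition --- collapse only feature vectors of points that share a target, so that the emerging null space of $K$ is orthogonal to $Y-\mu\mathbf{1}$ and the quadratic form converges to a finite pseudo-inverse value as $\sigma^2\to 0$. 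Your version is more self-contained and in one respect more careful: in the paper's parameterisation the complexity penalty also carries a $\tfrac{N}{2}\log\sigma_f^2$ term, and since the optimal scale satisfies $\sigma_f^2\propto\mathbf{y}^\top(K+\hat\sigma_n^2 I)^{-1}\mathbf{y}$, this term itself diverges to $+\infty$ when the collapsing direction is not orthogonal to the targets --- and at a faster rate (factor $N/2$ versus $1/2$) than the log-determinant gain. That is exactly the coordination issue you flag: an arbitrary collapse does not suffice, and your range condition is the hypothesis that makes the limiting construction actually deliver $\log p(Y;X)\to+\infty$ rather than $-\infty$. The final transfer to the inducing-point setting via tightness of the ELBO (\citet{burt2019rates}) is the same in both arguments.
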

\begin{proof}[Informal Proof of Proposition \ref{prop:feature_collapse}]

  The marginal likelihood for the DKL GP can be written in the following form, with $K = K(f_\theta(X), f_\theta(X))$
  \begin{align}
    \label{eq:marginal_likelihood}
    \log p(\mathbf{y}) & =\log \mathcal{N}\left(\mathbf{y} \mid \mathbf{0}, \sigma_f K+\sigma_{n}^{2} I_{N}\right)                                                                                                                                                    \\
                       & =-\underbrace{\frac{1}{2} \log \left|\sigma_f K+\sigma_{n}^{2} I_{N}\right|}_{\text {(a) complexity }}-\underbrace{\frac{1}{2} \mathbf{y}^{T}\left(\sigma_f K+\sigma_{n}^{2} I_{N}\right)^{-1} \mathbf{y}}_{\text {(b) data fit }} \nonumber
  \end{align}

  where $\sigma_f, \sigma_n$ are learnable scales for the kernel and observation noise respectively.
  Firstly it can be shown that

  \begin{lem}
    \label{lem:ober}
    Consider the GP marginal likelihood in Equation \ref{eq:marginal_likelihood}.
    The ``data fit'' term will attain the value $-\frac{N}{2}$ (where N is the number of data points) at the optimum of the marginal likelihood. \citep{ober2021promises}
  \end{lem}
  \begin{proof}[Proof of Lemma \ref{lem:ober}]
    A full proof is given in appendix A of \citet{ober2021promises}.
    A key part of this proof is to re-parameterize the noise term $\sigma_n$ as $\hat\sigma_n = \sigma_n / \sigma_f$, in order to force $\sigma_f$ out as a factor in the data fit term.
    The proof then follows by taking derivatives with respect to $\sigma_f$ and noting this derivative must be zero at optimality.
  \end{proof}

  The new parameter $\sigma_n$ is then independent of the kernel length scale.
  Using this parameterization, we can write the data complexity term as
  \begin{align}
    \frac{1}{2} \log \left|\sigma_f K+\sigma_{n}^{2} I_{N}\right|=\frac{N}{2} \log \sigma_{f}^{2}+\frac{1}{2} \log \left|K+\hat{\sigma}_{n}^{2} I_{N}\right|
  \end{align}

  where $K$ is the kernel matrix on the data. Note that the kernel function, scale of the kernel $\sigma_f$ and data noise are all learnable, and  chosen to maximize the marginal likelihood.
  If we assume that $\sigma_f$ is at an optimum, as in \citet{ober2021promises}, then the complexity penalty can only be minimized through the $ \log |K + \hat{\sigma}_n^2 I|$ term.

  If we have a parameterized kernel such as a neural network, K becomes $K(f_\theta(x), f_\theta(x))$ and the deep model can map the inputs to arbitrary locations.
  In particular, note that the determinant of the Gram matrix $K$ is zero if and only if two rows of the matrix are co-linear, which occurs if the feature maps of two different data points $f_\theta(x_i), f_\theta(x_j)$ are the same.
  Additionally, the noise parameter $\sigma_n$ of the model is a learnable parameter, which can take different values based on our assumption about the data.
  By moving the feature representations of two data points as close to each other as possible and learning $\sigma_n$ to be close to zero (under the assumption of no observation noise), the log determinant can become arbitrarily small: $|K + \hat{\sigma}_n^2 I_n| \to 0$, which will make the complexity term tend to negative infinity, thus increasing the marginal likelihood without bound.
\end{proof}

The cause of the pathology here is that the ``neural'' part of the kernel allows the model to move the data points around - whereas in a standard GP the data point locations are fixed, rather than being explicitly optimized.
The limited capacity of the network prevent the likelihood going to infinity in practice, but the objective is clearly pathological without constraints on $f_\theta$.

This result is for an exact GP, but we know from \citet{burt2019rates} that the ELBO becomes tight with sufficient inducing points, therefore the above analysis applies to the approximate GP setting of DUE as well, since it is a property of the GP marginal likelihood that the variational bound is approximating.

\subsection{Implementation}
\label{appendix:compute}
The inducing points locations are initialized using centroids obtained from performing k-means on the feature representation of 1,000 points randomly chosen from the training set.
The initial length scales are computed by taking the average pairwise euclidean distance between feature representation of the 1,000 points.
The models are implemented using GPyTorch \citep{gardner2018gpytorch} and PyTorch \citep{paszke2019pytorch} and use their default values if not otherwise specified.

\section{EXPERIMENTAL DETAILS}
\label{appendix:experimental_details}

\subsection{1D and 2D Experiments}
We perform these experiments using a simple feed-forward ResNet, similar to \citet{liu2020simple}.
The first linear layer maps from the input to the initial feature representation and does not have an activation function.
After which the model is a fully connected ResNet consisting of blocks computing $x' = x + f(x)$, with $f(\cdot)$ a combination of a linear mapping and a ReLU activation function.
Spectral normalization is applied to the linear mapping for which we use the implementation of \citet{behrmann2019invertible}.
We use the Adam optimizer for regression and SGD for the two moons classification with learning rate 0.01.
We use 4 layers with 128 features, a Lipschitz constant of 0.95, a single power iteration, an RBF kernel and additive Gaussian noise with standard deviation 0.1.
For toy regression, we use 20 inducing points and for two moons we use four.
For the Deep Ensemble in Figure \ref{fig:rff_appendix} we train 10 separate fully connected ResNet models, using a different initialization and data order for each and train to maximize the Gaussian log-likelihood as described in Section 2.4 of \citet{lakshminarayanan2017simple}.

\subsection{CIFAR-10}
For the WRN, we follow the experimental setup and implementation of \citet{zagoruyko2016wide}.
This means that for CIFAR-10, we use depth 28 with widen factor 10 and BasicBlocks with dropout.
We train for the prescribed 200 epochs (no early stopping) with batch size 128, starting with learning rate 0.1 and dropping with a factor of 0.2 at epoch 60, 120 and 160.
We use momentum 0.9 and weight decay 5e-4.
We select 20\% of the training data at random as a validation set for hyper-parameter tuning, but obtain the final model by using the full training set with the final set of hyper-parameters.
SV-DKL was trained using the suggested values in the GPyTorch \citep{gardner2018gpytorch} tutorial: grid size set to 64 and grid bounds between -10 and 10.

For spectral normalization, we use the implementation of \citet{behrmann2019invertible}, and also constrain batch normalization as described in \citep{gouk2018regularisation}.
We use 1 power iteration and use the lowest Lipschitz constant that still allows for good accuracy, which we found to be around 3 in practice.
We set the momentum of spectral batch normalization to 0.99 to reduce the variance of the running average estimator of the empirical feature variance, which can be a source of instability.

In the out-of-distribution detection experiment, it is crucial to preprocess the SVHN images in the same way as CIFAR-10, as we cannot know at test time from which dataset the image comes and the only sensible procedure is to preprocess as if it was coming from the training dataset.

\begin{figure}[t]
  \begin{subfigure}{0.32\linewidth}
    \centering
    \includegraphics[width=\linewidth]{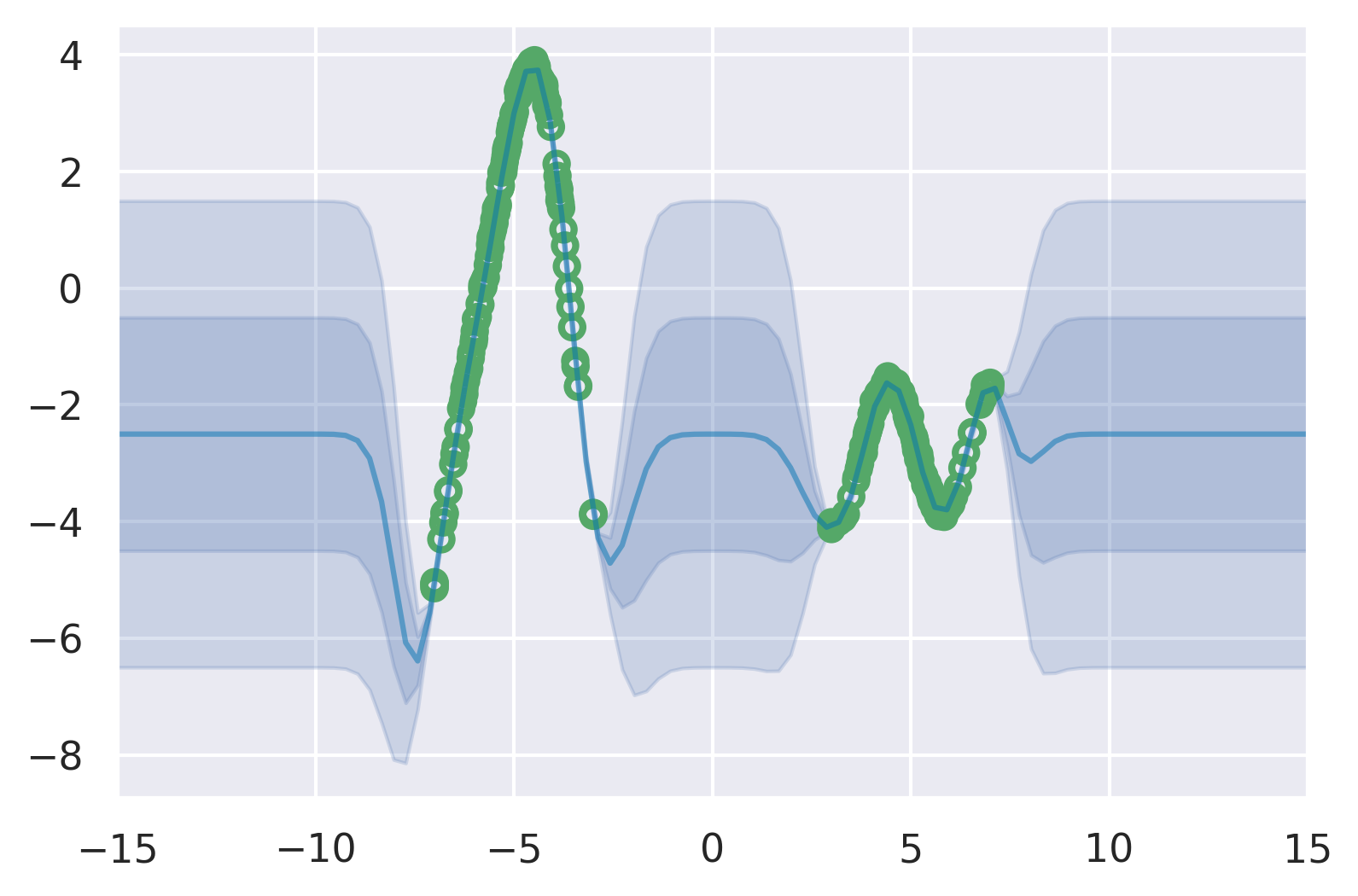}
    \caption{exact GP - 1K}
    \label{fig:exact_1k}
  \end{subfigure}%
  \begin{subfigure}{0.32\linewidth}
    \centering
    \includegraphics[width=\linewidth]{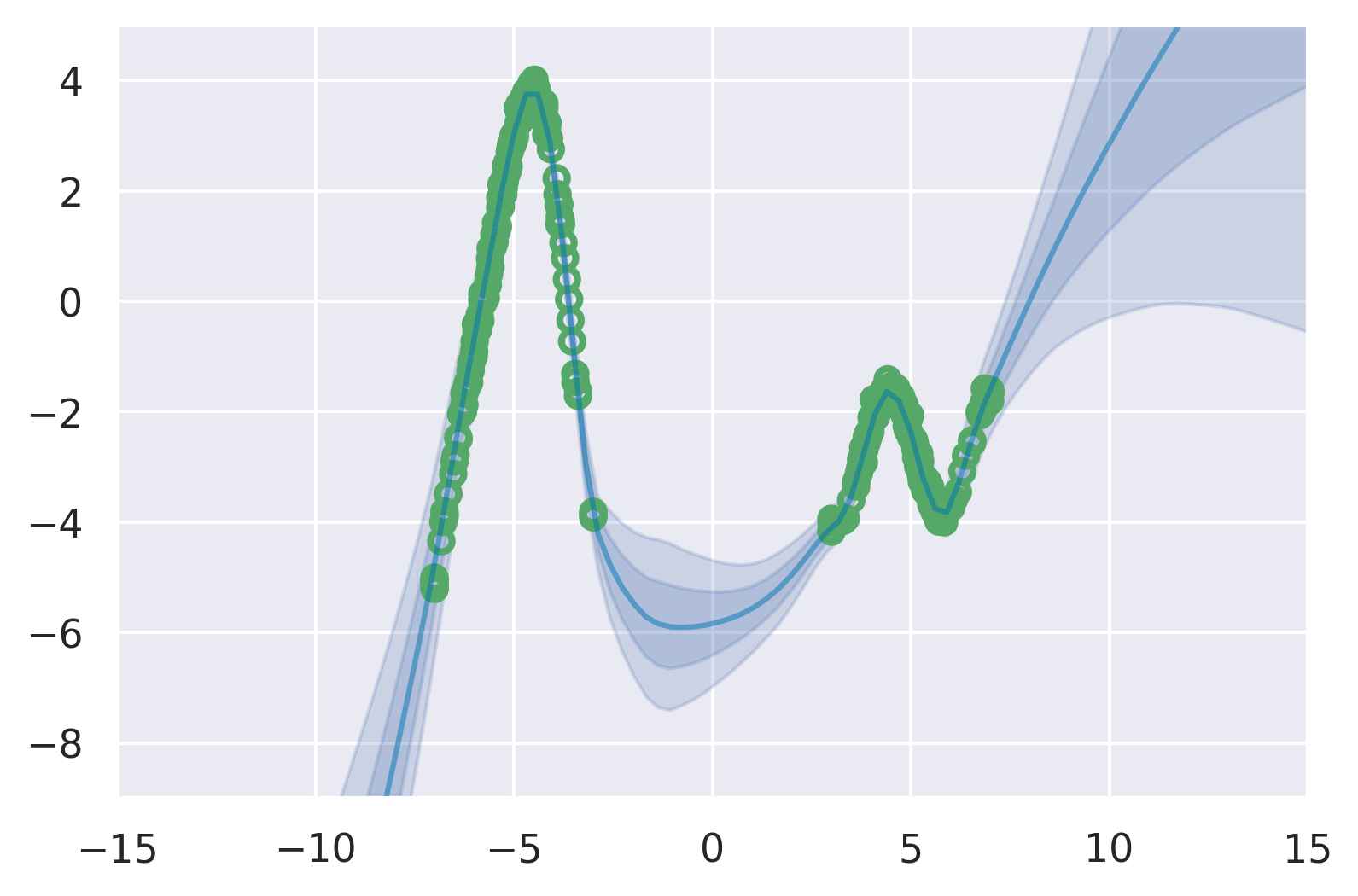}
    \caption{Deep Ensemble - 1K}
    \label{fig:deepensembles_1k}
  \end{subfigure}%
  \begin{subfigure}{0.32\linewidth}
    \centering
    \includegraphics[width=\linewidth]{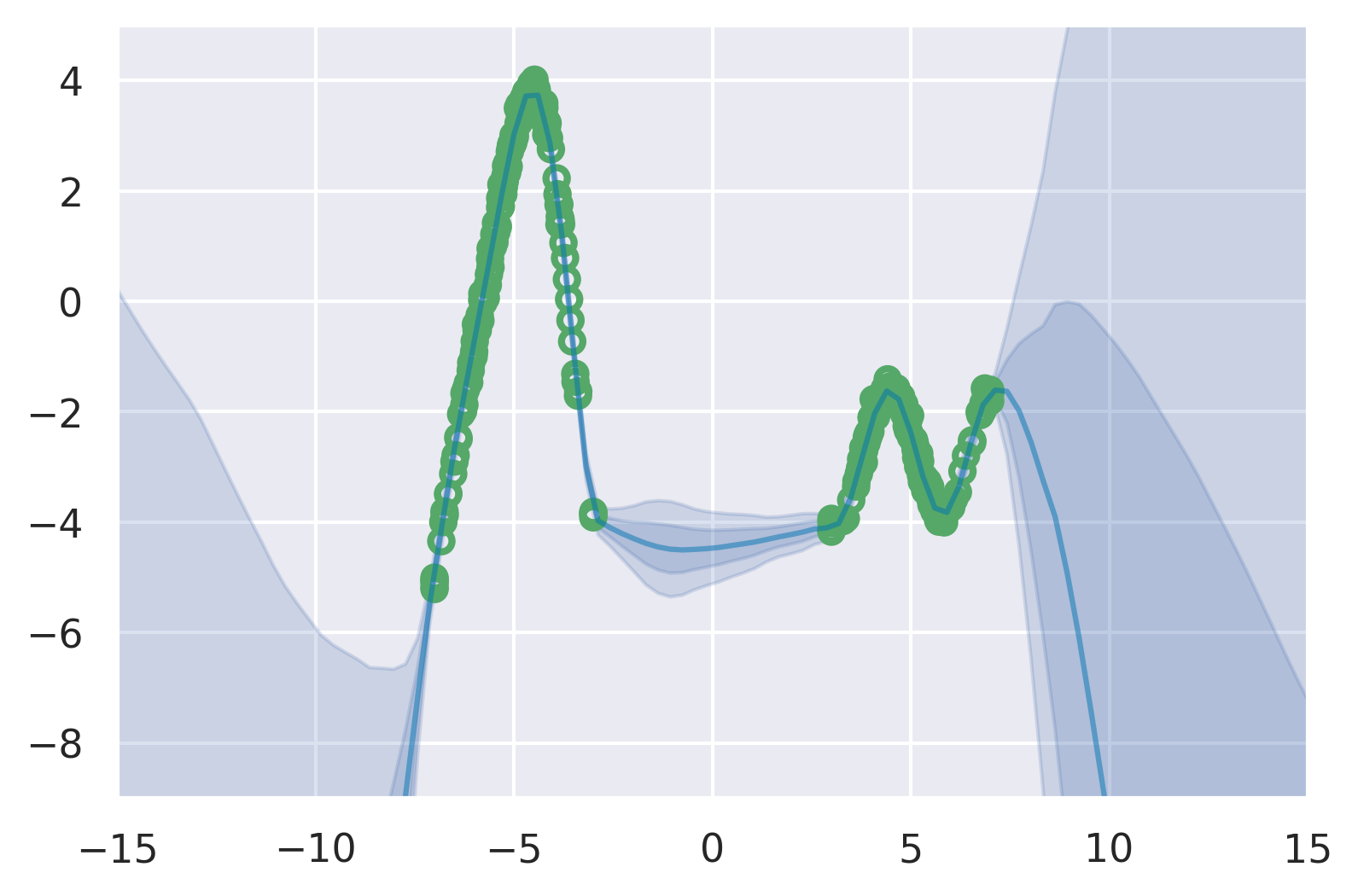}
    \caption{Deep Ensemble - 1M}
    \label{fig:deepensembles_1m}
  \end{subfigure}%
  \caption{
    In green 300 example training data points and in blue the prediction including uncertainty (one and two std).
    The exact GP could only be fit on 1K data points, so the 1M version is omitted.
    The Deep Ensemble does not revert to any prior, but also does not concentrate fully in the 1M case.
  }
  \label{fig:rff_appendix}
\end{figure}

\subsection{Regression Experiment}
\label{appendix:causal_experiments}
Following \citet{shalit2017estimating}, we use 63\%/27\%/10\% train / validation / test splits and report the RMSE evaluated on the test set over 1000 trials.
For each trial, we train for a maximum of 750 epochs with batch size 100 and report results on the model with the lowest negative log-likelihood evaluated on the validation set.
We employ Adam optimization with a learning rate of 0.001 and batch size of 100.

The feature extractor uses a feed-forward ResNet architecture with 3 layers, 200 hidden units per layer, and ELU activations.
Dropout is applied after each activation at a rate of 0.1.
The feature extractor takes the individual $x$ as input, and treatment $t$ is appended to the output of the feature extractor, in similar fashion to the TARNet architecture.
Spectral normalization with value 0.95 is used on all layers of the feature extractor.
The output GP uses a Mat\'ern kernel with $\nu=\frac32$ and 100 inducing points.

The baselines in Table \ref{table:causal} are: BART which is based on decision trees \citep{chipman2010bart}, BCEVAE which is the MC Dropout extension of CEVAE \citep{louizos2017causal}, and DKLITE \citep{zhang2020learning} which is a recent method also based on DKL.
Results are obtained from \citet{jesson2020identifying}.
DKLITE works by jointly training a VAE \citep{kingma2013auto} with a GP on the latent dimensions of the VAE.
For the DKLITE experiments we use the open source code from the authors\footnote{\url{https://bitbucket.org/mvdschaar/mlforhealthlabpub/src/master/alg/dklite/}}.
We write a custom loop over the IHDP dataset to follow the above protocol.
BTARNet, ensemble of TARNet and DUE all use the same feature extractor, as detailed in Appendix \ref{appendix:experimental_details}.
We report the root mean squared error between the predicted CATE and the true CATE (which is given in the dataset) to assess the error on the held out data set (lower is better).

\end{document}